\newcommand{\oracle}{{\tt ORACLE}}
\newcommand{\linucb}{{\tt LinUCB}}
\newcommand{\combts}{{\tt CombLinTS}}
\newcommand{\comblints}{{\tt CombLinTS}}
\newcommand{\comblinucb}{{\tt CombLinUCB}}
\newtheorem{theorem}{Theorem}
\newtheorem{lemma}{Lemma}
\newtheorem{remark}{Remark}
\newcommand{\bw}{{\bf w}}
\newcommand{\cA}{\mathcal{A}}
\newcommand{\cH}{\mathcal{H}}
\newcommand{\realset}{\mathbb{R}}
\newcommand{\abs}[1]{\left|#1\right|}
\newcommand{\E}[2]{\mathbb{E}_{#2} \! \left[#1\right]}
\newcommand{\I}[1]{\mathds{1} \! \left\{#1\right\}}
\DeclareMathOperator*{\argmax}{arg\,max\,}
\DeclareMathOperator*{\argmin}{arg\,min\,}
\icmltitlerunning{Efficient Learning in Large-Scale Combinatorial Semi-Bandits}
\begin{document} 

%\setstretch{0.992}

\twocolumn[
\icmltitle{Efficient Learning in Large-Scale Combinatorial Semi-Bandits}

% It is OKAY to include author information, even for blind
% submissions: the style file will automatically remove it for you
% unless you've provided the [accepted] option to the icml2015
% package.
\icmlauthor{Zheng Wen}{zhengwen@yahoo-inc.com}
\icmladdress{Yahoo Labs, Sunnyvale, CA
            }
\icmlauthor{Branislav Kveton}{kveton@adobe.com}
\icmladdress{Adobe Research, San Jose, CA}
\icmlauthor{Azin Ashkan}{azin.ashkan@technicolor.com}
\icmladdress{Technicolor Labs, Los Altos, CA
}

% You may provide any keywords that you 
% find helpful for describing your paper; these are used to populate 
% the "keywords" metadata in the PDF but will not be shown in the document
\icmlkeywords{}

\vskip 0.3in
]

\begin{abstract} 
A stochastic combinatorial semi-bandit is an online learning problem where at each step a 
learning agent chooses a subset of ground items subject to  combinatorial constraints, and then 
observes stochastic weights of these items and receives their sum as a payoff.
In this paper, we consider efficient learning in large-scale combinatorial semi-bandits with linear generalization, and as a solution, propose two learning algorithms
called \emph{Combinatorial Linear Thompson Sampling ($\combts$)} and \emph{Combinatorial Linear UCB ($\comblinucb$)}.
Both algorithms are computationally efficient as long as  the offline version of the combinatorial problem can be solved efficiently.
We establish that $\combts$ and $\comblinucb$ are also provably statistically efficient under reasonable assumptions, by developing regret bounds that are independent of the problem scale (number of items) and sublinear in time. We also evaluate $\combts$ on a variety of 
%synthetic and real-world 
problems with thousands of items.
Our experiment results demonstrate that $\combts$ is scalable, robust to the choice of algorithm parameters, and significantly outperforms the best of our baselines.
\end{abstract}

%!TEX root = Paper.tex

\section{Introduction}
\label{sec:introduction}

Combinatorial optimization is a mature field \citep{papadimitriou98combinatorial}, which has countless practical applications. One of the most studied problems in combinatorial optimization is maximization of a modular function subject to combinatorial constraints. Many important problems, such as minimum spanning tree (MST), shortest path, and maximum-weight bipartite matching, can be viewed as instances of this problem.

In practice, the optimized modular function is often unknown and needs to be learned while repeatedly solving the problem. This class of learning problems was recently formulated as a combinatorial bandit/semi-bandit, depending on the feedback model \citep{audibert14regret}. Since then, many combinatorial bandit/semi-bandit algorithms have been proposed: for the stochastic setting \citep{gai12combinatorial,chen13combinatorial,russo2014posteriorinformation,kveton15combinatorial}; for the adversarial setting \citep{cesabianchi12combinatorial,audibert14regret,neu2013efficient}; and for subclasses of combinatorial problems, matroid and polymatroid bandits \citep{kveton14matroid,kveton14polymatroid}, submodular maximization \citep{wen13sequential,gabillon13adaptive}, and cascading bandits \citep{kveton2015cascading}.
%which can be solved extremely efficiently. 
Many regret bounds have been established for the combinatorial semi-bandit algorithms. 
To achieve an $O (\sqrt{n})$ dependence on time $n$, all of the regret bounds are $\Omega(\sqrt{L})$, where $L$ is the number of items. The dependence on $L$ is intrinsic because the algorithms estimate the weight of each item separately, and matching lower bounds have been established (Section~\ref{sec:linear_generalization}).

However, in many real-world problems, the number of items $L$ is intractably large. For instance, online advertising 
in a mainstream commercial website can be 
viewed as a bipartite matching problem with millions of users and products; routing in the Internet can be formulated 
as a shortest path problem with billions of edges.
Thus, learning algorithms with $\Omega(\sqrt{L})$ regret are impractical in such problems.
On the other hand, in many problems, items have features and their weights are similar when the features are similar. In movie recommendation, for instance, the expected ratings of movies that are close in the latent space are also similar. In this work, we show how to leverage this structure to learn to make good decisions more efficiently. More specifically, we assume a \emph{linear generalization} across the items: conditioned on the features of an item, the expected weight of that item can be estimated using a linear model. Our goal is to develop more efficient learning algorithms for combinatorial semi-bandits with linear generalization.

It is relatively easy to extend many linear bandit algorithms, such as Thompson sampling \citep{thompson1933likelihood, shipra2012, russo2013posterior} and Linear UCB ($\linucb$, see \citet{auer02using, dani08stochastic, abbasi-yadkori11improved}) , to combinatorial semi-bandits with linear generalization. In this paper, we propose two learning algorithms, Combinatorial Linear Thompson Sampling ($\comblints$) and Combinatorial Linear UCB ($\comblinucb$), based on Thompson sampling and $\linucb$. Both $\comblints$ and $\comblinucb$ are computationally efficient, as long as the offline version of the combinatorial problem can be solved efficiently. The first major contribution of the paper is that we establish a \emph{Bayes regret bound} on $\comblints$ and a \emph{regret bound} on $\comblinucb$, under reasonable assumptions. Both bounds are $L$-independent, and sublinear in time. The second major contribution of the paper is that we evaluate $\comblints$ on a variety of 
%synthetic and real-world 
problems with thousands of items, and two of these problems are based on real-world datasets.
We only evaluate $\comblints$ since recent literature \citep{chapelle2011empirical} suggests that Thompson sampling algorithms usually outperform UCB-like algorithms in practice. Our experimental results demonstrate that $\comblints$ is scalable, robust to the choice of algorithm parameters, and significantly outperforms the best of our baselines.
It is worth mentioning that our derived $L$-independent regret bounds also hold in cases with $L=\infty$.
Moreover, 
as we will discuss in Section \ref{sec:conclusion},
our proposed algorithms and their analyses can be easily extended to the \emph{contextual combinatorial semi-bandits}.

Finally, we briefly review some relevant papers. \citet{gabillon14largeScale} and \citet{yue11linear} focus on submodular maximization
with linear generalization. Our paper differs from these two papers
 in the following two aspects: (1) our paper allows general combinatorial constraints while 
 %\citet{gabillon14largeScale} and \citet{yue11linear} 
 they do not; (2) our paper focuses on maximization of modular functions while 
 %\citet{gabillon14largeScale} and \citet{yue11linear}
 they
focus on submodular maximization.

%Finally, it is worth emphasizing that our derived $L$-independent regret bounds also hold in the cases when $L=\infty$. Moreover, as we will discuss in Section \ref{sec:contextual}, our proposed algorithms and their analyses can be easily extended to the \emph{contextual combinatorial semi-bandits}.

%!TEX root = Paper.tex

\section{Combinatorial Optimization}
\label{sec:combinatorial}
We focus on a class of combinatorial optimization problems that aim to find a \emph{maximum-weight} set 
from a given family of sets. Specifically, one such combinatorial optimization problem can be represented as 
a triple $\left(E, \cA, \bw \right)$, where (1) $E =\{ 1, \ldots, L \}$ is a set of $L$ items, called the \emph{ground set}, (2)
$\cA \subseteq \left \{ A \subseteq E : \, |A| \leq K \right \}$
%\begin{align}
%\cA \subseteq \left \{ A \subseteq E : \, |A| \leq K \right \}
%\end{align}
is a family of subsets of $E$ with up to $K$ items, where $K \leq L$, and (3) $\bw: E \rightarrow \realset$ is a \emph{weight function} that assigns each item $e$ in the ground set $E$ a real number. The total weight of all items in a set $A \subseteq E$ is defined as: 
%\small
\begin{align}
f(A, \bw)=\textstyle \sum_{e \in A} \bw(e),  \label{eq:reward}
\end{align}
% \normalsize
which is a linear functional of $\bw$ and a modular function in $A$. A set $A^\mathrm{opt}$ is a maximum-weight set in $\cA$ if: 
% \small
\begin{align}
A^{\mathrm{opt}} \in \argmax_{A \in \cA} f(A, \bw) = \argmax_{A \in \cA} \textstyle \sum_{e \in A} \bw(e).  \label{eq:CombinatorialOptimization}
\end{align}
% \normalsize
Many classical combinatorial optimization problems, such as finding an MST, bipartite matching, 
the shortest path problem and the traveling salesman problem (TSP), have form (\ref{eq:CombinatorialOptimization}). Though some of these problems can be solved efficiently (e.g. bipartite matching), others (e.g. TSP) are known to be NP-hard. However, for many such NP-hard problems, there exist computationally efficient \emph{approximation algorithms} and/or \emph{randomized algorithms} that achieve near-optimal solutions with high probability. Similarly to \citet{chen13combinatorial}, in this paper, we allow the agent to use any approximation / randomized algorithm $\oracle$ to solve (\ref{eq:CombinatorialOptimization}), and denote its solution as $A^\ast = \oracle(E, \cA, \bw)$. To distinguish from a learning algorithm, we refer to a combinatorial optimization algorithm as an \emph{oracle} in this paper.

%!TEX root = Paper.tex

\section{Combinatorial Semi-Bandits with Linear Generalization}
\label{sec:linearCB}

Many real-world problems are combinatorial in nature. In recommender systems, for instance, the user is typically recommended $K$ items out of $L$. The value of an item, such as the expected rating of a movie, is never known perfectly and has to be refined while repeatedly recommending to 
the pool of the users. Recommender problems are known to be highly structured. In particular, it is well known that the user-item matrix is typically low-rank \cite{koren09matrix} and that the value of an item can be written as a linear combination of its position in the latent space. In this work, we propose a learning algorithm for combinatorial optimization that leverages this structure. In particular, we assume that the weight of each item is a linear function of its features and then we learn the parameters of this model, jointly for all items.

\subsection{Combinatorial Semi-Bandits}
\label{sec:return}

We formalize our learning problem as a combinatorial semi-bandit. A combinatorial semi-bandit is a triple $(E, \cA, P)$, where $E$ and $\cA$ are defined in Section \ref{sec:combinatorial} and $P$ is a probability distribution over the weights $\bw \in \realset^L$ of the items in the ground set $E$. We assume that the weights $\bw$ are drawn i.i.d. from $P$. The mean weight is denoted by $\bar{\bw}=\E{\bw}{}$. Each item $e$ is associated with an \emph{arm} and we assume that \emph{multiple arms} can be pulled. A subset of arms $A \subseteq E$ can be pulled if and only if $A \in \cA$. The return of pulling arms $A$ is $f(A, \bw)$ (Equation (\ref{eq:reward})), the sum of the weights of all items in $A$. After the arms $A$ are pulled, we observe the individual return of each arm, $\{\bw(e): \, e \in A \}$. This feedback model is known as \emph{semi-bandit} \cite{audibert14regret}.

We assume that the combinatorial structure $(E, \cA)$ is known and the distribution $P$ is unknown. We would like to stress that we do not make any structural assumptions on $P$.
The optimal solution to our problem is a maximum-weight set in expectation:
%\small
\begin{align}
\hspace{-0.3cm}
A^{\mathrm{opt}} \in  \argmax_{A \in \cA} \E{f(A, \bw)}{\bw} 
= 
\argmax_{A \in \cA} \sum_{e \in A} \bar{\bw}(e).  \label{eq:A_opt}
\end{align}
%\normalsize
This objective is equivalent to the one in Equation (\ref{eq:CombinatorialOptimization}). 

Our learning problem is episodic. In each episode $t$, the learning agent adaptively chooses $A^t \in \cA$ based on its observations
of the weights up to episode $t$, gains $f(A^t , \bw_t)$, and observes the weights of all chosen items in episode $t$,
$\left \{ (e , \bw_t(e)) : \, e \in A^t \right \}$. The learning agent interacts with the combinatorial semi-bandit for $n$ times and 
its goal is to maximize the expected cumulative return in $n$-episodes $\E{\sum_{t=1}^n f(A^t, \bw_t)}{}$, where the expectation is over (1) the random weights $\bw_t$'s, (2) possible randomization in the learning algorithm, and (3) $\bar{\bw}$ if it is randomly generated. 
Notice that the choice of $A^t$ impacts both the return and observations in episode $t$. So we need to trade off \emph{exploration} and \emph{exploitation}, similarly to other bandit problems.

\subsection{Linear Generalization}
\label{sec:linear_generalization}

As we have discussed in Section \ref{sec:introduction}, many provably efficient algorithms have been developed for various combinatorial semi-bandits of form (\ref{eq:A_opt}) \cite{chen13combinatorial,gai12combinatorial,kveton14matroid,russo2014posteriorinformation}. However, since there are $L$ parameters to learn and these algorithms do not consider \emph{generalization} across items, the derived upper bounds on the expected cumulative regret and/or the Bayes cumulative regret of these algorithms are at least $O(\sqrt{L})$. Furthermore, \citet{audibert14regret} has derived an $\Omega (\sqrt{LKn})$ lower bound on adversarial combinatorial semi-bandits, while \citet{kveton14matroid} has derived an asymptotic $\Omega(L \log(n)/\Delta)$ gap-dependent lower bound on stochastic combinatorial semi-bandits, where $\Delta$ is the ``gap".

However, in many modern combinatorial semi-bandit problems, $L$ tends to be enormous. Thus, an $O(\sqrt{L})$ regret is unacceptably large in these problems. On the other hand, in many practical problems, there exists a \emph{generalization model} based on which the weight of one item can be (approximately) inferred based on the weights of other items.
By exploiting such generalization models, an 
$o (\sqrt{L})$ or even an $L$-independent cumulative regret might be achieved.

In this paper, we assume that there is a (possibly imperfect) linear generalization model across the items. Specifically, we assume that the agent knows a \emph{generalization matrix} $\Phi \in \realset^{L \times d}$ s.t. $\bar{\bw}$ either lies in or is ``close" to the subspace $\mathrm{span} \left[ \Phi \right]$. 
We use $\phi_e$ to denote the transpose of the $e$-th row of $\Phi$, and refer to it as the \emph{feature vector} of item $e$.
Without loss of generality, we assume that 
$\mathrm{rank} \left[ \Phi \right]=d$.

Similar to \citet{wennips13}, we distinguish between the \emph{coherent learning} cases, in which $\bar{\bw} \in \mathrm{span} \left[ \Phi \right]$, and the \emph{agnostic learning} cases, in which $\bar{\bw} \notin \mathrm{span} \left[ \Phi \right]$. Like existing literature on linear bandits \citep{dani08stochastic, abbasi-yadkori11improved}, the analysis in this paper focuses on coherent learning cases. However, we would like to emphasize that both of our proposed algorithms, $\comblints$ and $\comblinucb$, are also applicable to the agnostic learning cases. As is demonstrated in Section \ref{sec:experiments}, $\comblints$ performs well in the agnostic learning cases.

Finally, we define $\theta^\ast = \argmin_{\theta} \| \bar{\bw} - \Phi \theta \|_2 $. Since $\mathrm{rank} \left[ \Phi \right]=d$, $\theta^\ast$ is uniquely defined. Moreover, in coherent learning cases, we have $\bar{\bw}=\Phi \theta^*$.

\subsection{Performance Metrics}
Let $A^\ast = \oracle (E, \cA, \bar{\bw})$. In this paper, we measure the performance loss of a learning algorithm with respect to $A^\ast$.
Recall that the learning algorithm chooses $A^t$ in episode $t$,
we define $R_t=f(A^\ast, \bw_t)-f(A^t, \bw_t)$ as the \emph{realized
regret} in episode $t$. If the expected weight $\bar{\bw}$ is fixed but unknown, we define the \emph{expected cumulative regret} of the learning algorithm in $n$ episodes as
%\small
\begin{align}
R(n) = \textstyle \sum_{t=1}^n \mathbb{E} \left[  R_t \middle| \bar{\bw} \right], \label{eq:regret}
\end{align}
%\normalsize
where the expectation is over random weights and possible randomization in the learning algorithm. If necessary, we denote $R(n)$ as $R(n ; \bar{\bw})$ to emphasize the dependence on $\bar{\bw}$.
On the other hand, if $\bar{\bw}$ is randomly generated or the agent has a prior belief in $\bar{\bw}$, then from \citet{russo2013posterior}, the \emph{Bayes cumulative regret} of the learning algorithm in $n$ episodes is defined as
%\small
\begin{align}
R_{\mathrm{Bayes}}(n)=\E{R(n ; \bar{\bw})}{\bar{\bw}}= \textstyle \sum_{t=1}^n \mathbb{E} \left[  R_t \right],
\label{eq:Bayes_regret}
\end{align}
%\normalsize
where the expectation is also over $\bar{\bw}$. That is, $R_{\mathrm{Bayes}}(n)$ is a weighted average of $R(n ; \bar{\bw})$ under the prior on $\bar{\bw}$.

\section{Learning Algorithms}
\label{sec:algorithms}

In this section, we propose two learning algorithms for combinatorial semi-bandits: Combinatorial Linear Thompson Sampling ($\comblints$)
and Combinatorial Linear UCB ($\comblinucb$), which are respectively motivated by Thompson sampling and $\linucb$.
Both algorithms maintain a mean vector $\bar{\theta}_t$ and a covariance matrix
$\Sigma_t$, and use Kalman filtering to update  $\bar{\theta}_t$ and $\Sigma_t$.
They differ in how to choose $A^t$ (i.e. how to explore) in each episode $t$: $\comblints$ chooses $A^t$ based on a randomly sampled coefficient vector
$\theta_t$, while $\comblinucb$ chooses $A^t$ based on the \emph{optimism in the face of uncertainty (OFU)} principle.

\subsection{Combinatorial Linear Thompson Sampling}
\label{sec:comblints}
The psuedocode of $\comblints$ is given in Algorithm \ref{algorithm:combts}, where $(E, \cA)$ is the combinatorial structure, $\Phi$ is the generalization matrix, $\oracle$ is a combinatorial optimization algorithm, and 
$\lambda$ and $\sigma$ are two algorithm parameters controlling the \emph{learning rate}.
Specifically, $\lambda$ is an \emph{inverse-regularization} parameter and smaller $\lambda$ makes the covariance matrix 
$\Sigma_t$ closer to $0$. Thus, a too small $\lambda$ will lead to insufficient exploration and significantly reduce the performance of
$\comblints$.
%
%tends to make the sampled coefficient vector $\theta_t$ closer to $0$.
On the other hand, $\sigma$ controls the decrease rate of the covariance matrix $\Sigma_t$.
In particular, 
a large $\sigma$ will lead to slow learning, while a too small $\sigma$ will make the algorithm quickly converge to some sub-optimal coefficient vector.

\begin{algorithm*}[t]
\caption{Compute $\bar{\theta}_{t+1}$ and $\Sigma_{t+1}$ based on Kalman Filtering}
\label{algorithm:kalman}
\begin{algorithmic}
\STATE {\bf Input:} $\bar{\theta}_t$, $\Sigma_t$, $\sigma$, and feature-observation pairs $\left \{ \left( \phi_e ,\bw_t(e) \right) : \, e \in A^t \right \}$
\STATE \vspace{-0.12in}
\STATE Initialize $\bar{\theta}_{t+1} \leftarrow \bar{\theta}_t$ and $\Sigma_{t+1} \leftarrow \Sigma_t$
\FOR{$k=1,\ldots,|A^t|$}
    \STATE Update $\bar{\theta}_{t+1}$ and $\Sigma_{t+1}$ as follows, where $a_k^t$ is the $k$th element in $A^t$
    %\small
    \begin{align}
    \bar{\theta}_{t+1} \leftarrow & \left[ I- \frac{\Sigma_{t+1} \phi_{a_k^t} \phi_{a_k^t}^T}{\phi_{a_k^t}^T \Sigma_{t+1} \phi_{a_k^t}+ \sigma^2} \right] \bar{\theta}_{t+1} + 
    \left[ \frac{\Sigma_{t+1} \phi_{a_k^t} }{\phi_{a_k^t}^T \Sigma_{t+1} \phi_{a_k^t}+ \sigma^2} \right]\bw_t\left(a_k^t \right) \nonumber \\
    \Sigma_{t+1} \leftarrow &
    \Sigma_{t+1}- \frac{\Sigma_{t+1} \phi_{a_k^t} \phi_{a_k^t}^T \Sigma_{t+1}}{\phi_{a_k^t}^T \Sigma_{t+1} \phi_{a_k^t}+ \sigma^2},
    \end{align}
    %\normalsize
\ENDFOR
\STATE \vspace{-0.12in}
\STATE {\bf Output:} $\bar{\theta}_{t+1}$ and $\Sigma_{t+1}$
\end{algorithmic}
\end{algorithm*}

\begin{algorithm}[t]
  \caption{Combinatorial Linear Thompson Sampling 
  %($\comblints$)
  }
  \label{algorithm:combts}
  \begin{algorithmic}
    \STATE {\bf Input:} Combinatorial structure $(E, \cA)$, generalization matrix $\Phi \in \realset^{L \times d}$,
    algorithm parameters $\lambda, \sigma>0$, oracle $\oracle$
    \STATE \vspace{-0.12in}
    \STATE Initialize $\Sigma_1 \leftarrow \lambda^2 I \in \realset^{d \times d}$ and $\bar{\theta}_1=0 \in \realset^d$
    \FORALL{$t = 1, 2, \ldots, n$} 
    \STATE Sample $\theta_{t} \sim N \left( \bar{\theta}_t, \Sigma_t \right)$ 
    \STATE Compute $A^t \leftarrow \oracle (E, \cA, \Phi \theta_t )$
    \STATE Choose set $A^t$, and observe $\bw_t(e)$, $\forall e \in A^t$
    \STATE Compute $\bar{\theta}_{t+1}$ and $\Sigma_{t+1}$ based on Algorithm \ref{algorithm:kalman}    
    \ENDFOR
  \end{algorithmic}
\end{algorithm}

In each episode $t$, Algorithm \ref{algorithm:combts} consists of three steps.
First, it randomly samples a coefficient vector $\theta_t$ from a Gaussian distribution. Second, it computes $A^t$ based on $\theta_t$ and the pre-specified oracle. Finally, it updates the mean vector $\bar{\theta}_{t+1}$ and the covariance matrix $\Sigma_{t+1}$ based on Kalman filtering (Algorithm \ref{algorithm:kalman}).

It is worth pointing our that if (1) $\bar{\bw}=\Phi \theta^\ast$, (2) the prior on $\theta^\ast$
is $N(0, \lambda^2 I)$, and (3) $\forall (t,e)$, the noise
$\eta_t(e)=\bw_t(e)-\bar{\bw}(e)$ is independently sampled from $N(0 , \sigma^2)$, then in each episode $t$, the
$\comblints$ algorithm samples $\theta_t$ from the posterior distribution of $\theta^\ast$.
We henceforth refer to a case satisfying condition (1)-(3) as a \emph{coherent Gaussian case}. 
Obviously, the $\comblints$ algorithm can be applied to more general cases, 
even to cases with no prior and/or agnostic learning cases.

\subsection{Combinatorial Linear UCB}

The pseudocode of $\comblinucb$ is given in Algorithm \ref{algorithm:comblinucb}, where $E$, $\cA$, $\Phi$ and $\oracle$ are defined the same as in
Algorithm \ref{algorithm:combts}, and $\lambda$, $\sigma$, and $c$ are three algorithm parameters.
Similarly, $\lambda$ is an inverse-regularization parameter, $\sigma$ 
controls the decrease rate of the covariance matrix,
and 
$c$ controls the \emph{degree of optimism} (exploration). Specifically, if $c$ is too small, the algorithm might converge to some sub-optimal coefficient
vector due to insufficient exploration; on the other hand, too large $c$ will lead to excessive exploration and slow learning.

\begin{algorithm}[t]
  \caption{Combinatorial Linear UCB}
  \label{algorithm:comblinucb}
  \begin{algorithmic}
    \STATE {\bf Input:} Combinatorial structure $(E, \cA)$, generalization matrix $\Phi \in \realset^{L \times d}$,
    algorithm parameters $\lambda, \sigma, c >0$, oracle $\oracle$
     \STATE \vspace{-0.12in}
    \STATE Initialize $\Sigma_1 \leftarrow \lambda^2 I \in \realset^{d \times d}$ and $\bar{\theta}_1=0 \in \realset^d$
    \FORALL{$t = 1, 2, \ldots, n$} 
    \STATE Define the UCB weight vector $\hat{\bw}_t$ as
    %\small
    \[
    \hat{\bw}_t (e) = \left < \phi_e, \bar{\theta}_t \right > + c \sqrt{\phi_e^T \Sigma_t \phi_e } \quad \forall e \in E
    \]
   % \normalsize
    \STATE Compute $A^t \leftarrow \oracle (E, \cA, \hat{\bw}_t )$
    \STATE Choose set $A^t$, and observe $\bw_t(e)$, $\forall e \in A^t$
    \STATE Compute $\bar{\theta}_{t+1}$ and $\Sigma_{t+1}$ based on Algorithm \ref{algorithm:kalman}  
    \ENDFOR
  \end{algorithmic}
\end{algorithm}

In each episode $t$, Algorithm \ref{algorithm:comblinucb} also consists of three steps.
First, for each $e \in E$, it computes an upper confidence bound (UCB) $\hat{\bw}_t (e)$.
Second, it computes $A^t$ based on $\hat{\bw}_t$ and the pre-specified oracle. Finally, it updates $\bar{\theta}_{t+1}$ and $\Sigma_{t+1}$ based on Kalman filtering (Algorithm \ref{algorithm:kalman}).

%!TEX root = Paper.tex

\section{Regret Bounds}
\label{sec:bounds}
In this section, we present a Bayes regret bound on $\comblints$, and a regret bound on
$\comblinucb$. We will also briefly discuss how these bounds are derived, as well as their tightness.
The detailed proofs are left to the appendices.
Without loss of generality, throughout this section, we assume that
$\| \phi_e \|_2 \leq 1$, $\forall e \in E$.
%% COMMENT: this condition is only necessary for the analysis

\subsection{Bayes Regret Bound on $\comblints$}
\label{sec:bound_combts}

%Based on some recent work on Thompson sampling \cite{russo2013posterior,russo2014posteriorinformation} and linear bandits \cite{dani08stochastic}, we can derive the following problem-independent upper bound on $BR(n)$ 

We have the following upper bound on $R_{\mathrm{Bayes}} (n)$ when $\comblints$ is applied to a coherent Gaussian case with the right parameter. 

\begin{theorem}
\label{thm:bound_combts}
If (1) $\bar{\bw}=\Phi \theta^\ast$, (2) the prior on $\theta^\ast$ is 
$N(0, \lambda^2 I)$, (3) the noises are i.i.d. sampled from $N(0 , \sigma^2)$,
and (4) $\lambda \geq \sigma$,
then under $\comblints$ algorithm with parameter $(\Phi, \lambda, \sigma)$, we have
%\small
\begin{align}
\label{eq:bound_1}
R_{\mathrm{Bayes}}(n) \leq \tilde{O} \left(
K \lambda \sqrt{dn \min \left\{\ln(L), d \right\}}
\right) .
\end{align}
%\normalsize
\end{theorem}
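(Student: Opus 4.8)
The plan is to adapt the Bayesian analysis of posterior sampling from \citet{russo2013posterior} to the combinatorial semi-bandit setting. The starting observation is that in a coherent Gaussian case the Kalman filter maintains the \emph{exact} posterior of $\theta^\ast$, so conditioned on the history $\cH_{t-1}$ the sampled vector $\theta_t$ and the true $\theta^\ast$ are both distributed as $N(\bar\theta_t,\Sigma_t)$. Since $A^t=\oracle(E,\cA,\Phi\theta_t)$ and $A^\ast=\oracle(E,\cA,\Phi\theta^\ast)$ are produced by the same (possibly randomized) oracle fed with the same conditional distribution, $A^t$ and $A^\ast$ are conditionally i.i.d.\ given $\cH_{t-1}$. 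I would first reduce $R_{\mathrm{Bayes}}(n)=\sum_t\E{R_t}{}$ to $\sum_t\E{f(A^\ast,\bar\bw)-f(A^t,\bar\bw)}{}$ by taking expectation over the independent noise $\bw_t-\bar\bw$, and then, with the surrogate weights $U_t(e)=\inner{\phi_e}{\bar\theta_t}+c\sqrt{\phi_e^T\Sigma_t\phi_e}$ and $f(A,U_t)=\sum_{e\in A}U_t(e)$, decompose each term as
\begin{align}
&\E{f(A^\ast,\bar\bw)-f(A^\ast,U_t)}{} \nonumber\\
&\quad+\E{f(A^\ast,U_t)-f(A^t,U_t)}{} \nonumber\\
&\quad+\E{f(A^t,U_t)-f(A^t,\bar\bw)}{}. \nonumber
\end{align}
Because $U_t$ is $\cH_{t-1}$-measurable and $A^t,A^\ast$ are conditionally i.i.d., the middle term vanishes, leaving a ``pessimism'' term and a ``confidence-width'' term.

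The next step chooses $c$ so that, on a high-probability good event $G$, the surrogate is a valid upper bound and its mirror a valid lower bound on $\inner{\phi_e}{\theta^\ast}$. Conditioned on $\cH_{t-1}$, $\inner{\phi_e}{\theta^\ast}$ is Gaussian with mean $\inner{\phi_e}{\bar\theta_t}$ and variance $\phi_e^T\Sigma_t\phi_e$, so there are two admissible choices: a union bound over the $L$ items and $n$ episodes requiring $c=\tilde O(\sqrt{\ln L})$, or an ellipsoidal ($\chi^2_d$) bound on $\|\theta^\ast-\bar\theta_t\|_{\Sigma_t^{-1}}$ requiring $c=\tilde O(\sqrt{d})$ and yielding $|\inner{\phi_e}{\theta^\ast-\bar\theta_t}|\le c\sqrt{\phi_e^T\Sigma_t\phi_e}$ via Cauchy--Schwarz. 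Taking the smaller gives $c=\tilde O(\sqrt{\min\{\ln L,d\}})$, which is exactly where the $\min\{\ln L,d\}$ factor enters. On $G$ the pessimism term is $\le 0$ and the confidence-width term is at most $\sum_{e\in A^t}2c\sqrt{\phi_e^T\Sigma_t\phi_e}$; the contribution off $G$ is controlled by Cauchy--Schwarz against the finite second moment of $R_t$ times the negligible $\mathbb{P}(G^c)$, and is absorbed into $\tilde O$.

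It remains to bound $\E{\sum_{t}\sum_{e\in A^t}\sqrt{\phi_e^T\Sigma_t\phi_e}}{}$. Applying Cauchy--Schwarz over the at most $nK$ pairs $(t,e)$ reduces this to $\sqrt{nK}\,\bigl(\sum_t\sum_{e\in A^t}\phi_e^T\Sigma_t\phi_e\bigr)^{1/2}$, so it suffices to bound the sum of squared widths. Using the Kalman identity $\Sigma_{t+1}^{-1}=\Sigma_t^{-1}+\sigma^{-2}\sum_{e\in A^t}\phi_e\phi_e^T$ and a log-determinant (elliptical potential) argument, together with the elementary inequality $\log(1+x)\ge\frac{\log(1+B)}{B}x$ on $[0,B]$ with $B=K\lambda^2/\sigma^2$ (valid since $\phi_e^T\Sigma_t\phi_e\le\phi_e^T\Sigma_1\phi_e\le\lambda^2$), I would obtain the per-episode bound
\begin{align}
\sum_{e\in A^t}\phi_e^T\Sigma_t\phi_e\le\frac{K\lambda^2}{\log(1+K\lambda^2/\sigma^2)}\,\log\frac{\det\Sigma_{t+1}^{-1}}{\det\Sigma_t^{-1}}. \nonumber
\end{align}
Telescoping and using $\log(\det\Sigma_{n+1}^{-1}/\det\Sigma_1^{-1})\le d\log(1+\lambda^2 nK/(d\sigma^2))$ gives $\sum_t\sum_{e\in A^t}\phi_e^T\Sigma_t\phi_e=\tilde O(K\lambda^2 d)$, where assumption (4), $\lambda\ge\sigma$, ensures $\log(1+K\lambda^2/\sigma^2)\ge\log 2$ so the leading constant is $O(K\lambda^2)$.

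The main obstacle is this last step: $A^t$ is selected using the \emph{start-of-episode} covariance $\Sigma_t$, whereas the filter performs up to $K$ rank-one updates \emph{within} the episode, so the standard per-observation potential lemma does not apply directly; quantifying the ``staleness'' of $\Sigma_t$ is precisely what produces the factor $K$ (rather than $\sqrt{K}$) and hence the linear-in-$K$ rate. Collecting the pieces,
\begin{align}
R_{\mathrm{Bayes}}(n)&\lesssim c\cdot\sqrt{nK}\cdot\sqrt{K\lambda^2 d\,\log(\cdot)} \nonumber\\
&=\tilde O\!\left(K\lambda\sqrt{dn\min\{\ln L,d\}}\right), \nonumber
\end{align}
which is the claimed bound.
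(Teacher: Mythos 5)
Your proposal is correct and shares the paper's overall architecture---the posterior-sampling symmetry that makes $A^\ast$ and $A^t$ conditionally i.i.d.\ given $\cH_t$, the UCB-surrogate decomposition whose middle term vanishes, and an elliptical-potential bound on $\sum_{t}\sum_{e \in A^t}\sqrt{\phi_e^T \Sigma_t \phi_e}$---but you execute two sub-steps by a genuinely different route. For the optimism-deficit term $\mathbb{E}\left[g(A^\ast,\theta^\ast)-U_t(A^\ast)\right]$ you condition on a high-probability event (a union bound over the $L$ items for the $\sqrt{\ln L}$ regime, a $\chi^2_d$ ellipsoid plus Cauchy--Schwarz for the $\sqrt{d}$ regime) and absorb the bad event via the second moment of $R_t$; the paper instead bounds the expectation directly with a truncated-Gaussian lemma ($\mathbb{E}\left[X \I{X \geq 0}\right] \leq \frac{s}{\sqrt{2\pi}}e^{-\mu^2/(2s^2)}$ for $\mu \leq 0$), and in the $\sqrt{d}$ regime uses a conic decomposition of $\phi_e$ in the eigenbasis of $\Sigma_t$ rather than an ellipsoid. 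The paper's route avoids the (routine but fiddly) bad-event bookkeeping and also exploits the conditional independence of $A^t$ and $\theta^\ast$ to get $\mathbb{E}\left[U_t(A^t)-g(A^t,\theta^\ast) \mid \cH_t\right] = c\,\mathbb{E}\left[\sum_{e\in A^t}\sqrt{\phi_e^T\Sigma_t\phi_e} \mid \cH_t\right]$ as an exact identity, so it pays $c$ rather than your $2c$ on the width term. For the potential lemma you correctly identify the within-episode staleness of $\Sigma_t$ as the crux and resolve it by treating the episode's $\left|A^t\right|$ rank-one updates as one batch via $\det(I+M)\geq 1+\mathrm{trace}(M)$, yielding $\sum_{e\in A^t}\phi_e^T\Sigma_t\phi_e\leq\frac{K\lambda^2}{\ln\left(1+K\lambda^2/\sigma^2\right)}\ln\frac{\det\Sigma_{t+1}^{-1}}{\det\Sigma_t^{-1}}$; the paper's Lemma \ref{lem:key_ineq_1} instead raises each single-observation determinant inequality to the power $K$, arriving at the same $K\lambda\sqrt{dn\ln(\cdot)}$ rate but with denominator $\ln\left(1+\lambda^2/\sigma^2\right)$---your batched version is marginally tighter and arguably cleaner, and both make the extra factor of $K$ (rather than $\sqrt{K}$) appear for exactly the reason flagged in Remark \ref{remark:remark1}. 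Both routes deliver the stated bound.
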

Notice that condition (1)-(3) ensure it is a coherent Gaussian case, and condition (4) almost always holds\footnote{Condition (4) is not essential, please refer to Theorem \ref{thm:bound_1_refine} in Appendix \ref{proof:combts} for a Bayes regret bound without condition
(4).}.
The $\tilde{O}$ notation hides the logarithm factors.
We also note that Equation (\ref{eq:bound_1}) is a minimum of two bounds. The first bound is $L$-dependent, but it is only $O ( \sqrt{\ln(L)})$; on the other hand,
the second bound is $L$-independent, but is $\tilde{O}(d)$ instead of 
$\tilde{O} ( \sqrt{d})$.

%We will outline the proof of Theorem \ref{thm:bound_combts} at the end of this subsection; please refer to Appendix \ref{proof:combts} for the detailed proof.

We now outline the proof of Theorem \ref{thm:bound_combts}, which is motivated by
\citet{russo2013posterior} and \citet{dani08stochastic}.
Let $\cH_t$ denote the ``history" (i.e. all the available information) by the start of episode $t$. Note that from the Bayesian perspective, conditioning on $\cH_t$, $\theta^\ast$
and $\theta_t$ are i.i.d. drawn from $N(\bar{\theta}_t, \Sigma_t)$ \citep{russo2013posterior}.
This is because that conditioning on $\cH_t$, the posterior belief in $\theta^\ast$ is $N(\bar{\theta}_t, \Sigma_t)$ and based on Algorithm \ref{algorithm:combts}, $\theta_t$ is independently sampled from $N(\bar{\theta}_t, \Sigma_t)$. Since $\oracle$ is a fixed combinatorial optimization algorithm (even though it can be independently randomized), and $E,\cA, \Phi$ are all fixed, then 
conditioning on $\cH_t$,
$A^\ast$ and $A^t$ are also i.i.d., furthermore, $A^\ast$ is conditionally independent of $\theta_t$, and $A^t$ is conditionally independent of $\theta^\ast$.

To simplify the exposition, $\forall \theta \in \realset^d$ and $\forall A \subseteq E$, we define
%\small
\[
g(A , \theta)= \textstyle \sum_{e \in A} \left< \phi_e, \theta \right>,
\]
%\normalsize
where $\left< \cdot , \cdot \right>$ is an alternative notation for inner product.
Thus we have
$\E{R_t \middle | \cH_t}{}=\E{g(A^\ast, \theta^\ast) - g(A^t, \theta^\ast) \middle | \cH_t}{}$. We also define a UCB function
$U_t : 2^E \rightarrow \realset$ as
%
%\small
\[
U_t (A)=\textstyle \sum_{e \in A} \left[  \left<\phi_e, \bar{\theta}_t \right> + c 
\sqrt{\phi_e^T \Sigma_t \phi_e} \right],
\]
%\normalsize
%
where $c>0$ is a constant to be specified. Notice that conditioning on $\cH_t$, $U_t$ is a deterministic function and $A^\ast, A^t$ are i.i.d., then 
$\E{U_t (A^t)- U_t (A^\ast) \middle | \cH_t}{}=0$ and 
%
%\small
\begin{align}
\E{ R_t \middle | \cH_t}{} 
=& \E{ g(A^\ast, \theta^\ast)- U_t (A^\ast)  \middle | \cH_t}{} \nonumber \\
+&
\E{U_t (A^t)-g(A^t, \theta^*)  \middle | \cH_t}{}. \label{eq:bayes_decomposition}
\end{align}
%\normalsize
%
Theorem \ref{thm:bound_combts} follows by respectively bounding
the two terms on the righthand side of Equation (\ref{eq:bayes_decomposition}).
Two key observations are (1) if $c = \tilde{O} \left( \sqrt{\min \left \{ \ln(L), d \right \}}\right)$,
then 
%\small
\[
\E{ g(A^\ast, \theta^\ast)- U_t (A^\ast)  \middle | \cH_t}{} = O(1),
\]
%\normalsize
and (2)
%\small
\[
\E {U_t (A^t)-g(A^t, \theta^*) \middle | \cH_t}{} =  c \E{ \textstyle \sum_{e \in A^t} 
\sqrt{\phi_e^T \Sigma_t \phi_e} \middle | \cH_t}{},
\]
%\normalsize
and we have a worst-case bound (see Lemma \ref{lem:key_ineq_1} in Appendix \ref{proof:combts}) on 
$\sum_{t=1}^n \sum_{e \in A^t} 
\sqrt{\phi_e^T \Sigma_t \phi_e}$.
Please refer to Appendix \ref{proof:combts} for the detailed proof for Theorem \ref{thm:bound_combts}.

Finally, we briefly discuss the tightness of our bound. Without loss of generality, we assume that $\lambda=1$.
For the special case when $\Phi=I$ (i.e. no generalization), 
\citet{russo2014posteriorinformation} provides an $O(\sqrt{LK \log(L/K) n})$ upper bound on $R_{\mathrm{Bayes}}(n)$ when Thompson sampling is applied,
and \citet{audibert14regret} provides an $\Omega (\sqrt{LKn})$ lower bound\footnote{\citet{audibert14regret} focuses on the adversarial setting but the lower bound
is stochastic. So it is a  reasonable lower bound to compare with.}.
Since $L=d$ when $\Phi=I$, the above results indicate that for general $\Phi$, the best upper bound one can hope is $O(\sqrt{Kdn})$. Hence, our bound is at most
$\tilde{O} ( \sqrt{K \min \{\ln(L), d \}})$ larger. 
It is well-known that the $O (\sqrt{d})$ factor is due to linear generalization \citep{dani08stochastic, abbasi-yadkori11improved},
and
as is discussed in the appendix (see Remark \ref{remark:remark1}), the extra $O(\sqrt{K})$ factor is also due to linear generalization. 
They might be intrinsic, but we leave the final word and tightness analysis to future work.

\subsection{Regret Bound on $\comblinucb$}
\label{sec:comblinucb}

Under the assumptions that
(1) the support of $P$ is a subset of 
$[0,1]^L$,
%(i.e. $\bw_t (e) \in [0,1]$ $\forall t$ and $\forall e \in E$)
(2) the stochastic item weights $\left \{ \bw(e) \right \}_{e \in E}$ are statistically independent under $P$, and
(3) the oracle $\oracle$ \emph{exactly} solves the offline optimization problem\footnote{
If $\oracle$ is an approximation algorithm, a variant of Theorem \ref{thm:bound_comblinucb}
can be proved (see Appendix \ref{appendix:variant}).},
we have the following upper bound on $R(n)$ when $\comblinucb$ is applied to coherent learning cases:

%Similarly, based on some recent work on linear bandits \citep{dani08stochastic, abbasi-yadkori11improved}, we can derive a problem-independent upper bound on , 
%The regret bound is given below:

\begin{theorem}
\label{thm:bound_comblinucb}
For any $\lambda, \sigma>0$, any $\delta \in (0,1)$, and any $c$ satisfying
%\small
\begin{align}
c \geq \frac{1}{\sigma} \sqrt{d \ln \left( 1+ \frac{n K \lambda^2}{d \sigma^2}\right) + 2 \ln \left( \frac{1}{\delta} \right)} + \frac{\left \| \theta^* \right \|_2}{\lambda} ,
\label{eq:c_lb}
\end{align}
%\normalsize
if $\bar{\bw} = \Phi \theta^\ast$ and the above three assumptions hold, then 
under $\comblinucb$ algorithm with parameter $(\Phi, \lambda, \sigma, c)$, we have
%\small
\[
R(n) \leq 2c  K \lambda\sqrt{\frac{d n  \ln \left( 1 + \frac{nK \lambda^2}{d \sigma^2} \right)}{\ln \left(1+ \frac{\lambda^2}{\sigma^2} \right)}} + 
 nK \delta.
\]
%\normalsize
\end{theorem}

Generally speaking, the proof for Theorem \ref{thm:bound_comblinucb} proceeds as follows. We first construct a confidence set $G$ of $\theta^\ast$ based on the ``self normalized bound" developed in \citet{abbasi-yadkori11improved}. Then we decompose the regret over the high-probability ``good" event $G$ and the low-probability
``bad" event $\bar{G}$, where $\bar{G}$ is the complement of $G$. Finally, we bound the term associated with the event $G$ based on the 
same worst-case bound on $\sum_{t=1}^n \sum_{e \in A^t} 
\sqrt{\phi_e^T \Sigma_t \phi_e}$ used in the analysis for $\comblints$ (see Lemma \ref{lem:key_ineq_1} in Appendix \ref{proof:combts}), and bound the term associated with the event $\bar{G}$ based on a naive bound.
Please refer to Appendix \ref{proof:comblinucb} for the detailed proof of Theorem \ref{thm:bound_comblinucb}.

Notice that if we choose $\lambda=\sigma=1$, $\delta=1/(nK)$, and $c$ as the lower bound specified in Inequality (\ref{eq:c_lb}), then the regret bound derived in Theorem \ref{thm:bound_comblinucb} is also 
$\tilde{O} ( K d \sqrt{n} )$. Compared with the lower bound derived in \citet{audibert14regret}, this bound is at most
$\tilde{O} ( \sqrt{Kd} )$ larger. Similarly, the extra $O ( \sqrt{K})$ and $O ( \sqrt{d})$ factors are also due to linear generalization.

Finally, we would like to clarify that the assumption that the support of $P$ is bounded is not essential.
By slightly modifying the analysis, we can achieve a similar high-probability bound on the \emph{realized cumulative regret} as long as
$P$ is \emph{sub-Gaussian}. We also want to point out that the $L$-independent bounds derived in both Theorem \ref{thm:bound_combts} and
\ref{thm:bound_comblinucb} will still hold even if $L=\infty$.

%!TEX root = Paper.tex

\section{Experiments}
\label{sec:experiments}

In this section, we evaluate $\comblints$ on three problems.
The first problem is synthetic, but the last two problems are constructed based on real-world datasets.
%
%a synthetic problem (Section \ref{sec:synthetic_experiment}) and two real-world problems (Section \ref{sec:ad_experiment} and \ref{sec:recommendations_experiment}).
%
As we have discussed in Section \ref{sec:introduction}, we only evaluate $\comblints$
since in practice Thompson sampling algorithms usually outperform the UCB-like algorithms.
Our experiment results in the synthetic problem
%Section \ref{sec:synthetic_experiment} 
demonstrate that $\comblints$ is both scalable and
robust to the choice of algorithm parameters. 
They also suggest the Bayes regret bound derived in Theorem \ref{thm:bound_combts} is likely to be tight.
On the other hand, our experiment results in the last two problems
% Section \ref{sec:ad_experiment} and \ref{sec:recommendations_experiment}
show the  \emph{value of linear generalization} in real-world settings: with domain-specific but imperfect linear generalization (i.e. agnostic learning),
$\comblints$ can significantly outperform state-of-the-art learning algorithms that do not exploit linear generalization, which serve as baselines in these two problems.

In all three problems, the oracle $\oracle$ exactly solves the offline combinatorial optimization problem.
Moreover,
in the two real-world problems, we demonstrate the experiment results using a new performance metric,
the \emph{expected per-step return} in $n$ episodes, which is defined as
%\small
\begin{align}
  \frac{1}{n} \textstyle \E{\sum_{t = 1}^n f(A^t, \bw_t) \middle | \bar{\bw}  }{\bw_1, \dots, \bw_n }.
  \label{eq:per-step regret}
\end{align}
%\normalsize
Obviously, it is the expected cumulative return in $n$ episodes divided by $n$.
We demonstrate experiment results using expected cumulative return rather than 
$R(n)$ 
%and $R_{\mathrm{Bayes}}(n)$
 since it is more illustrative.

\subsection{Longest Path}
\label{sec:synthetic_experiment}
We first evaluate $\comblints$ on a synthetic problem. Specifically, 
we experiment with a stochastic longest path problem on an $(m+1) \times (m+1)$ square grid\footnote{That is, each side has $m$ edges and $m+1$ nodes. Notice that the longest path problem and the shortest path problem are mathematically equivalent.}.
The items in the ground set $E$ are the edges in the grid, $L=2m(m+1)$ in total. The feasible set $\cA$ are all paths 
in the grid from the upper left corner to the bottom right corner that follow the directions of the edges. The length of these paths is
$K=2m$. 
In this problem, we focus on coherent Gaussian cases and randomly sample the linear generalization matrix $\Phi \in \realset^{L \times d}$ 
to weaken the dependence on a particular choice of $\Phi$.

Our experiments are parameterized by
a sextuple $(m,d,\lambda_{\mathrm{true}}, \sigma_{\mathrm{true}}, \lambda, \sigma)$, where $m$, $d$, $\lambda$, and $\sigma$ are defined before
and $\lambda_{\mathrm{true}}$ and $\sigma_{\mathrm{true}}$ are respectively the true standard deviations of $\theta^\ast$ and the observation noises. In each round of simulation, we first construct a problem instance as follows: (1) generate $\Phi$ by sampling each component of $\Phi$ i.i.d. from $N(0,1)$; (2) sample
$\theta^\ast$ independently from $N(0, \lambda_{\mathrm{true}}^2 I)$ and set $\bar{\bw} = \Phi \theta^\ast$; and (3) $\forall (t,e)$, the observation noise $\eta_t(e)=\bw_t(e) -\bar{\bw}(e)$ is i.i.d. sampled from $N(0, \sigma_{\mathrm{true}}^2 )$. Then we apply $\comblints$ with parameter
$(\lambda, \sigma)$  to the constructed instance for $n$ episodes. 
Notice that in general $(\lambda, \sigma) \neq \left( \lambda_{\mathrm{true}}, \sigma_{\mathrm{true}} \right)$.
We average the experiment results over $200$ simulations to estimate the
Bayes cumulative regret $R_{\mathrm{Bayes}} (n)$.

We start with a ``default case" with $m=30$, $d=200$, $\lambda_{\mathrm{true}}=\lambda=10$ and 
$\sigma_{\mathrm{true}}=\sigma=1$. Notice in this case $L=1860$ and $|\cA| \approx 1.18 \times 10^{17}$.
We choose $n=150$ since in the default case, the Bayes per-episode regret of $\comblints$ vanishes far before period $150$.
In the default case $R_{\mathrm{Bayes}}(150) \approx 1.56 \times 10^4$.
In the experiments, 
 we vary only one and only one parameter while keeping all the other parameters fixed to their
``default values" specified above to
demonstrate the \emph{scalability}
and \emph{robustness} of $\comblints$.

\begin{figure}[t]
\centering
\subfigure[{\small $R_{\mathrm{Bayes}}$ vs. $m$}]
{
\label{fig:Fig_m}
\includegraphics[scale=0.307]{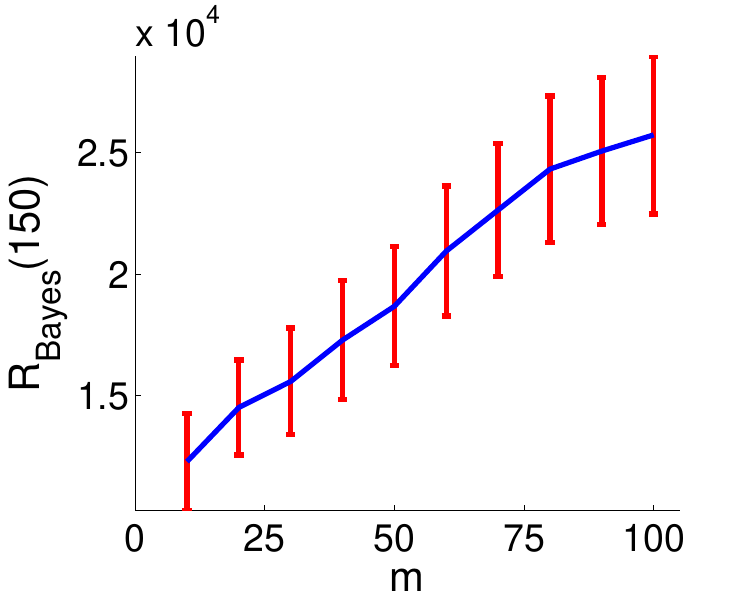}
}
\subfigure[{\small $R_{\mathrm{Bayes}}$ vs. $d$}]
{
\label{fig:Fig_d}
\includegraphics[scale=0.307]{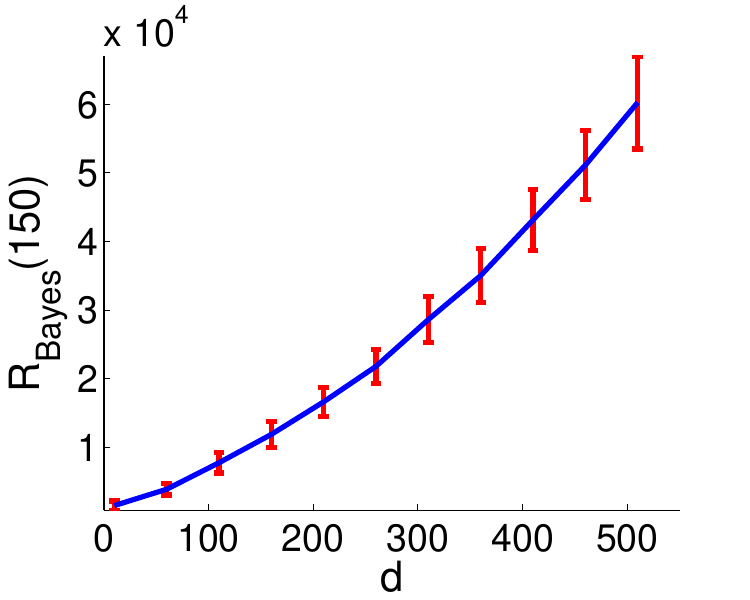}
}
\subfigure[{\small $R_{\mathrm{Bayes}}$ vs. $\sigma$}]
{
\label{fig:Fig_sigma}
\includegraphics[scale=0.307]{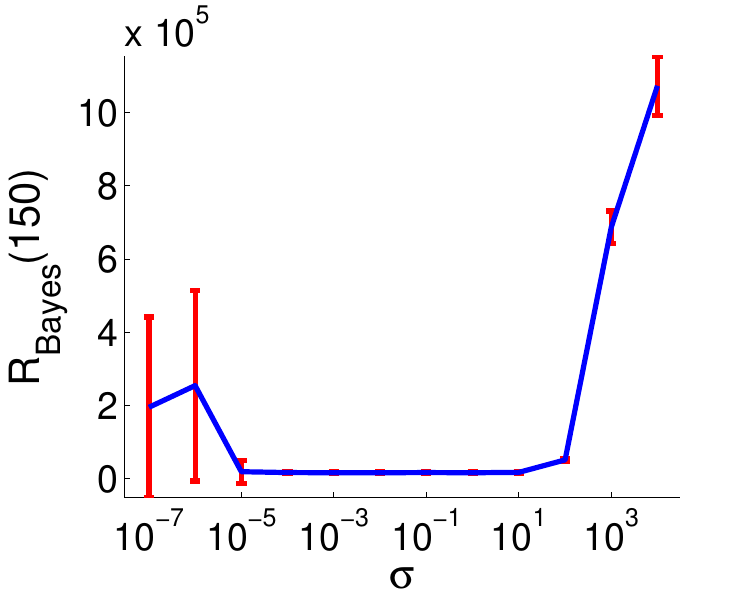}
}
\subfigure[{\small $R_{\mathrm{Bayes}}$ vs. $\lambda$}]
{
\label{fig:Fig_lambda}
\includegraphics[scale=0.307]{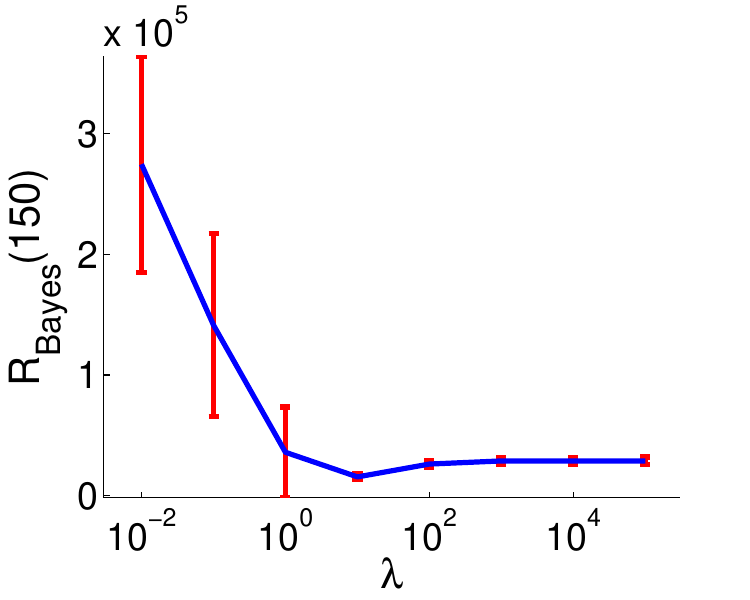}
}

\caption{Experiment results in the longest path problem}
\end{figure}

First, we study how the Bayes cumulative regret of $\comblints$ scales with the size of the problem
by varying $m=10,20,\ldots, 100$, and show the result in Figure \ref{fig:Fig_m}.
The experiment results show that  $R_{\mathrm{Bayes}}(150)$ roughly increases linearly with
$m$, which indicates that $\comblints$ is scalable with respect to the problem size $m$.
We also experiment with $m=250$, in this case we have $L \approx 125k$, $|\cA| \approx1.17 \times 10^{149}$,
and $R_{\mathrm{Bayes}}(150) \approx 6.56 \times 10^4$, which  is only $4.2$ times of $R_{\mathrm{Bayes}}(150) $ in the default case.
It is worth mentioning that this result also suggests that the Bayes regret bound derived in Theorem \ref{thm:bound_combts} is (almost) tight
in this problem\footnote{Recall that Theorem \ref{thm:bound_combts} requires $ \max_{e \in E} \| \phi_e\|_2 \leq 1$.
 It can be easily extended to cases with 
 $\max_{e \in E} \| \phi_e \|_2 \leq M$ by scaling the Bayes regret bound by $M$.
 However, in this problem $\phi_e$ is not bounded since it is sampled from a Gaussian distribution.
 We believe that Theorem \ref{thm:bound_combts} can be extended to this case by exploiting the properties of Gaussian distribution.
 Roughly speaking, in this problem, with high probability, $\|\phi_e \|_2 = O(\sqrt{d})$.
\label{footnote_1} }.
To see it, notice that $K=2m$ and $L=O (m^2)$, and hence the Bayes regret bound derived in Theorem \ref{thm:bound_combts}
is $\tilde{O} (m)$.

Second, we study how the Bayes cumulative regret of $\comblints$ scales with $d$, the dimension of the feature vectors, by
varying $d=10,60,110, \ldots, 510$, and demonstrate the result in Figure \ref{fig:Fig_d}.
The experiment results indicate that  $R_{\mathrm{Bayes}}(150)$ also roughly increases linearly with 
$d$, and hence $\comblints$ is also scalable with the feature dimension $d$.
This result also suggests that the $\tilde{O} (\sqrt{d})$ bound in Theorem \ref{thm:bound_combts} is (almost) tight\footref{footnote_1}.

%\footnote{Notice that Theorem 
% \ref{thm:bound_combts} requires $\max_{e \in E} \| \phi_e \|_2 \leq 1$. It can be easily extended to cases with 
% $\max_{e \in E} \| \phi_e \|_2 \leq M$ by scaling the Bayes regret bound by $M$. Roughly speaking, in this problem,
% 
%}.

Finally, we study the robustness of $\comblints$ with respect to the algorithm parameters $\sigma$ and $\lambda$.
In Figure \ref{fig:Fig_sigma}, we vary $\sigma=10^{-7}, 10^{-6}, \ldots, 10^4$ and in Figure \ref{fig:Fig_lambda}, we vary
$\lambda=10^{-2}, 10^{-1}, \ldots, 10^5$. We would like to emphasize again that we only vary the algorithm parameters and
fix $\sigma_{\mathrm{true}}=1$ and $\lambda_{\mathrm{true}}=10$.
The experiment results show that $\comblints$ is robust to the choice of algorithm parameters and performs well
for a wide range of $\sigma$ and $\lambda$. 
However, too small or too large $\sigma$, or too small $\lambda$, can significantly reduce the performance of $\comblints$, as 
we have discussed in Section \ref{sec:comblints}.

\subsection{Online Advertising}
\label{sec:ad_experiment}

In the second experiment, we evaluate $\comblints$ on an advertising problem. Our objective is to identify $100$ people that are most likely to accept an advertisement offer, subject to the targeting constraint that exactly half of them are females.
%
%If the model (i.e. the probabilities that each person accepts an offer) of this problem were known,
%this problem could be solved as a combinatorial optimization with a linear objective function and matroid constraints.
%
%The constraint is that a half of these people must be females.
%
% If the model of this problem was known, the probability that each group accepts an offer, this problem can be solved as a combinatorial optimization with a linear objective function and matroid constraints.
%
Specifically,
the ground set $E$ includes $33\text{k}$ representative people from Adult dataset \cite{ucimlrepository}, which was 
collected in the $1994$ US census. A feasible solution $A$ is any subset of $E$ with $\abs{A} = 100$ and satisfying the
targeting constraint mentioned above.
%
% such that $\abs{A} = 100$ and exactly half of the people in $A$ are females. 
%
We assume that person $e$ accepts an advertisement offer with probability
%\small
\begin{align*}
  \bar{\bw}(e) =
  \begin{cases}
    0.15 & \text{income is at least $50$k} \\
    0.05 & \text{otherwise},
  \end{cases}
\end{align*}
%\normalsize
and people accept offers independently of each other. The features in the generalization matrix $\Phi$ are the age, which is binned into $7$ groups; gender; whether the person works more than $40$ hours per week; and the length of education in years. All these features can be constructed based on the Adult dataset.

% This information is typically easily accessible.
%
%The experiment is episodic. In each episode, $\combts$ selects solution $A^t$, observes which people in $A^t$ accept the advertisement offer, and then updates its model of the world. The performance of $\combts$ is measured by the \emph{expected per-step return} in $n$ episodes:
%\begin{align}
%  \frac{1}{n} \E{\sum_{t = 1}^n f(A^t, \bw_t)}{\bw_1, \dots, \bw_n},
%  \label{eq:per-step regret}
%\end{align}
%the expected cumulative return in $n$ episodes divided by $n$. 

$\comblints$ is compared to three baselines. The first baseline is the optimal solution $A^{\mathrm{opt}}$. The second baseline is ${\tt CombUCB1}$ \cite{kveton15combinatorial}. This algorithm estimates the probability that person $e$ accepts the offer $\bar{\bw}(e)$ independently of the other probabilities. The third baseline is $\comblints$ without linear generalization, which we simply refer to as ${\tt CombTS}$. As in ${\tt CombUCB1}$, this algorithm estimates the probability that person $e$ accepts the offer $\bar{\bw}(e)$ independently of the other probabilities. The posterior of $\bar{\bw}(e)$ is modeled as a beta distribution.

\begin{figure}[t]
  \centering
  \includegraphics[width=3.2in, bb=2.25in 4.25in 6.25in 6.75in]{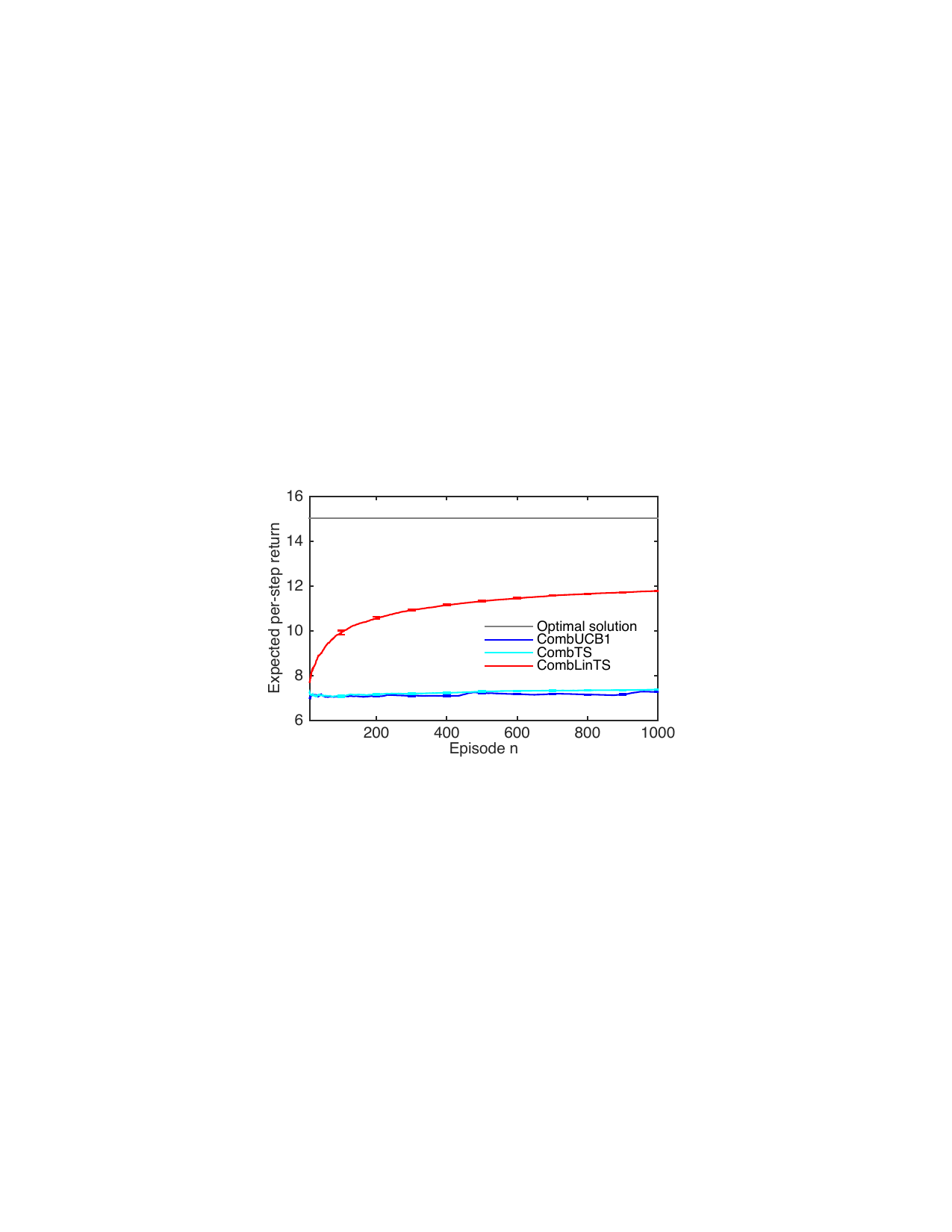}
  \caption{Regret on the ad dataset.}
  \label{fig:ad trends}
\end{figure}

Our experiment results are reported in Figure~\ref{fig:ad trends}. We observe two major trends. First, $\comblints$ learns extremely quickly.
In particular, its per-step return at episode $100$ is $70\%$ of the optimum, and
its per-step return at episode $1$k is $80\%$ of the optimum.
%
% In particular, in less than $100$ episodes, $\comblints$ learns a policy whose expected per-step return is $70\%$ of the optimum. In $1$k episodes, $\comblints$ learns a policy whose expected per-step return is $80\%$ of the optimum. 
 %
 These results are remarkable since the linear generalization is imperfect in this problem.
 Second, both ${\tt CombUCB1}$ and ${\tt CombTS}$ perform poorly due to insufficient observations with respect to the model complexity.
 Specifically, in $1$k episodes, the people in $E$ are observed $100$k times, which implies that each person is observed only $3$ times on average. This is not enough to discriminate the people who are likely to accept the advertisement offer from those that are not.
 
% Because $\abs{E} = 33\text{k}$, each person is observed only $3$ times on average. This is not enough to discriminate the people who are likely to accept the advertisement offer from those that are not.

\subsection{Artist Recommendation}
\label{sec:recommendations_experiment}

In the last experiment, we evaluate $\combts$ on a problem of recommending $K=10$ music artists that are most likely to be chosen by an average user of a music recommendation website.
%
%people. If the artists are popular, an average person is likely to choose them. 
%
Specifically,
the ground set $E$ include artists from the \emph{Last.fm} music recommendation dataset~\cite{Cantador:RecSys2011}. The dataset contains tagging and music artist listening information from a set of $2\text{k}$ users from Last.fm online music system\footnote{http://www.lastfm.com}. The tagging part includes the tag assignments of all artists provided by the users. For each user, the artists to whom she listened and the number of listening events are also available in the dataset. 

\begin{figure}[t]
  \centering
  \includegraphics[width=3.2in, bb=2.25in 4.25in 6.25in 6.75in]{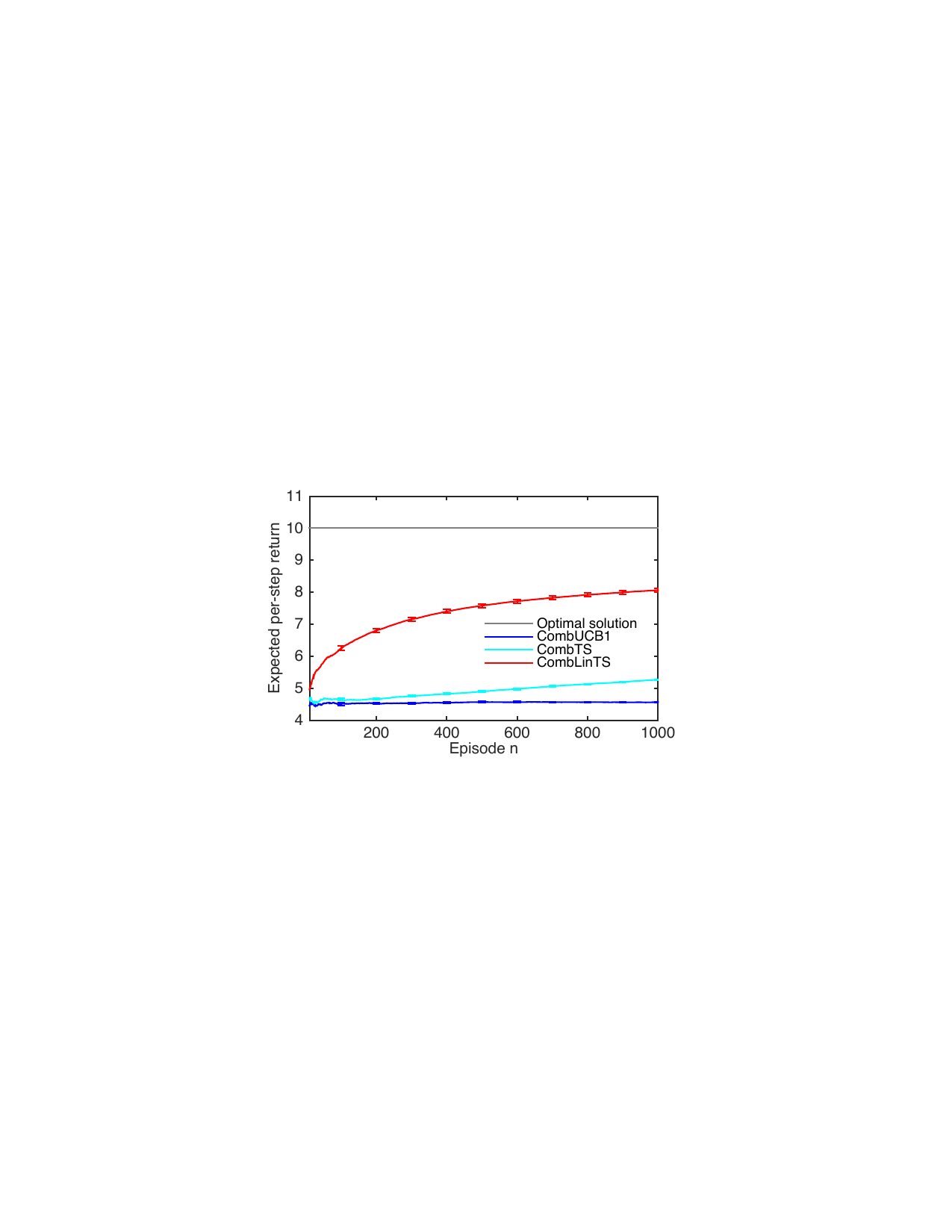}
  \caption{Regret on the recommendation experiment.}
  \label{fig:lastfm trends}
\end{figure}

We choose $E$ as the set of
artists that were listened by at least two users and had at least one tag assignment among the top $20$ most popular tags, and
$\abs{E} \approx 6$k. 
%So the cardinality of $E$ is $6\text{k}$.
For each artist $e$, we construct its feature vector $\phi_e \in [0, 1]^{20}$ by setting its $j$th component as the fraction of users who assigned tag $j$ to this artist.  We assume that each artist $e$ is chosen by an average user with probability $\bar{\bw}(e) = \frac{1}{|U_e|} \sum_{u \in U_e} \bw_u(e)$, where $U_e$
is the set of users that listened to artist $e$, and $\bar{\bw}_u(e)$ is the probability that user $u$ likes artist $e$.
We estimate $\bar{\bw}_u(e)$ based on a  Na\"{\i}ve Bayes classifier with respect to the number of person/artist listening events.

% which is estimated using a Na\"{\i}ve Bayes classifier with respect to the number of person/artist listening events.
% 
% 
% The expected weight $\bar{\bw}(e)$ is the probability that artist $e$ is chosen. We estimate it as $\bar{\bw}(e) = \frac{1}{|U_e|} \sum_{u \in U_e} \bw_u(e)$, where $U_e$ contains people that listened to artist $e$, and $\bw_u(e)$ is the probability that person $u$ likes artist $e$ which is estimated using a Na\"{\i}ve Bayes classifier with respect to the number of person/artist listening events.

% we define the feature vector $\phi_e \in [0, 1]^{20}$, where $\phi_e(j)$ indicates the fraction of people who assigned tag $j$ to this artist. The expected weight $\bar{\bw}(e)$ is the probability that artist $e$ is chosen. We estimate it as $\bar{\bw}(e) = \frac{1}{|U_e|} \sum_{u \in U_e} \bw_u(e)$, where $U_e$ contains people that listened to artist $e$, and $\bw_u(e)$ is the probability that person $u$ likes artist $e$ which is estimated using a Na\"{\i}ve Bayes classifier with respect to the number of person/artist listening events.

Like Section \ref{sec:ad_experiment}, we also compare $\comblints$ to three baselines: the optimal solution $A^{\mathrm{opt}}$, the ${\tt CombUCB1}$ algorithm and the ${\tt CombTS}$ algorithm. Our experiment results are reported in Figure~\ref{fig:lastfm trends}. Similarly as Figure~\ref{fig:ad trends}, 
the expected per-step return of $\comblints$ approaches that of $A^{\mathrm{opt}}$ much faster than ${\tt CombUCB1}$ and ${\tt CombTS}$. Moreover,
both ${\tt CombUCB1}$ and ${\tt CombTS}$ perform poorly due to the insufficient observations with respect to the model complexity: In $1$k episodes, each 
artist is observed less than $2$ times on average, which is not enough to discriminate most popular artists from less popular artists.

%We compare $\combts$ to three baselines: the optimal solution $A^\ast$, the ${\tt CombUCB1}$ algorithm which estimates the probability that an artist is chosen by a person independently of the other probabilities, and ${\tt CombTS}$ which is $\combts$ without linear generalization. 
%
%Our experiment results are reported in Figure~\ref{fig:lastfm trends}. We observe the following trends: First, the expected return of $\combts$ approaches that of the optimal policy much faster than ${\tt CombUCB1}$ and ${\tt CombTS}$. In particular, after $300$ episodes, $\combts$ learns a policy whose expected per-step return is $70\%$ of the optimum. In $1$k episodes, $\combts$ learns a policy whose expected per-step return is about $80\%$ of the optimum. Second, both ${\tt CombUCB1}$ and ${\tt CombTS}$ perform poorly due to the following reason: In $1$k episodes, the artists in $E$ are observed $10\text{k}$ times. Because $\abs{E} = 6\text{k}$, each artist is observed less than $2$ times on average. This is not enough to discriminate the artists who are likely to be chosen by an average person from those that are not as popular.

%!TEX root = Paper.tex

\section{Conclusion}
\label{sec:conclusion}

We have proposed two learning algorithms, $\comblints$ and $\comblinucb$, for stochastic combinatorial semi-bandits with linear generalization.
The main contribution of this work is two-fold: First, we 
have established $L$-independent regret bounds for these two algorithms under reasonable assumptions, where $L$ is the number of items.
Second,
we have also evaluated $\comblints$ on a variety of problems.
The experiment results in the first problem show
that $\comblints$ is scalable and robust, and the experiment results in the other two problems
demonstrate the value of exploiting linear generalization in real-world settings.

% three problems. The first problem is synthetic and the last two problems are motivated by real-world
%datasets. The experiment results demonstrate that $\comblints$ is scalable and robust, 
%and the value of exploiting linear generalization in real-world settings.

It is worth mentioning that our results can be easily extended to the \emph{contextual combinatorial semi-bandits} with linear generalization.
In a contextual combinatorial semi-bandit, the probability distribution $P$ (and hence the expected weight $\bar{\bw}$) also depends
on a context $x$, which either follows an exogenous stochastic process or is adaptively chosen by an adversary.
Assume that each state-item pair $(x,e)$ is associated with a feature vector $\phi_{x,e}$, then similar to \citet{shipra2013}, 
both $\comblints$ and $\comblinucb$, as well as their analyses, can be generalized to the 
contextual combinatorial semi-bandits.

We leave open several questions of interest. 
One interesting open question is how to derive regret bounds for $\comblints$ and $\comblinucb$ in the agnostic learning cases.
Another interesting open question is how to extend the results to combinatorial semi-bandits with nonlinear generalization. 
We believe that our results can be extended to combinatorial semi-bandits with \emph{generalized linear generalization}\footnote{That is,
$\bar{\bw} (e) = h(\phi_e^T \theta^\ast)$, where $h: \, \realset \rightarrow \realset$ is a strictly monotone function.}, but leave it to future work.

%\setstretch{1}

\bibliography{Reference}
\bibliographystyle{icml2015}

%!TEX root = Paper.tex

\newpage
\onecolumn

\appendix

\newcommand{\trace}{\mathrm{trace}}

\section{Proof for Theorem \ref{thm:bound_combts}}
\label{proof:combts}
To prove Theorem \ref{thm:bound_combts}, we first prove the following theorem:

\begin{theorem}
\label{thm:bound_1_refine}
If (1) $\bar{\bw}=\Phi \theta^\ast$, (2) the prior on $\theta^\ast$ is 
$N(0, \lambda^2 I)$,
and (3) the noises are i.i.d. sampled from $N(0 , \sigma^2)$,
then under $\combts$ algorithm with parameter $(\Phi, \lambda, \sigma)$, then we have
\begin{align}
R_{\mathrm{Bayes}}(n) \leq 1+K \lambda
\min \left \{
\sqrt{\ln \left(\frac{\lambda L n}{\sqrt{2 \pi}} \right)},
\sqrt{d \ln \left( \frac{2d Kn \lambda}{\sqrt{2 \pi}}\right)}
\right \}
\sqrt{\frac{2d n  \ln \left( 1 + \frac{nK \lambda^2}{d} \right)}{\ln\left(1+ \frac{\lambda^2}{\sigma^2} \right)}}.
\label{eq:bound}
\end{align}
\end{theorem}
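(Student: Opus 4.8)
The plan is to start from the conditional regret decomposition in Equation (\ref{eq:bayes_decomposition}), take total expectations and sum over episodes, writing
$R_{\mathrm{Bayes}}(n) = \sum_{t=1}^n \E{g(A^\ast,\theta^\ast) - U_t(A^\ast)}{} + \sum_{t=1}^n \E{U_t(A^t) - g(A^t,\theta^\ast)}{}$,
and then bound the two sums separately for a suitable choice of the optimism constant $c$ in $U_t$. Everything rests on the Thompson-sampling symmetry already noted: conditioned on $\cH_t$, the pairs $(\theta^\ast,A^\ast)$ and $(\theta_t,A^t)$ are identically distributed, $U_t$ is $\cH_t$-measurable (so $\E{U_t(A^t)-U_t(A^\ast)\mid \cH_t}{}=0$), and $A^t$ is conditionally independent of $\theta^\ast$. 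Writing $s_{t,e} := \sqrt{\phi_e^T \Sigma_t \phi_e}$ and recording the key fact $\Sigma_t \preceq \Sigma_1 = \lambda^2 I$ (so $s_{t,e}\le\lambda$ since $\normw{\phi_e}{2}\le 1$) will be used throughout.

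For the first sum (the ``optimism gap'') I would write $g(A^\ast,\theta^\ast)-U_t(A^\ast)=\sum_{e\in A^\ast}\left[\inner{\phi_e}{\theta^\ast-\bar\theta_t}-c\,s_{t,e}\right]$ and use that, conditioned on $\cH_t$, $\theta^\ast-\bar\theta_t\sim N(0,\Sigma_t)$, hence $\inner{\phi_e}{\theta^\ast-\bar\theta_t}\sim N(0,s_{t,e}^2)$. I would bound the gap two ways and take the smaller. (a) The $L$-dependent branch: upper bound the gap by $\sum_{e\in E}\pospart{\inner{\phi_e}{\theta^\ast-\bar\theta_t}-c\,s_{t,e}}$ and apply the Gaussian-tail estimate $\E{\pospart{Z-c}}{}\le \tfrac{1}{\sqrt{2\pi}}e^{-c^2/2}$ for $Z\sim N(0,1)$, together with $s_{t,e}\le\lambda$, giving a per-episode contribution at most $\tfrac{L\lambda}{\sqrt{2\pi}}e^{-c^2/2}$. (b) The $d$-dependent branch: via $\inner{\phi_e}{\theta^\ast-\bar\theta_t}\le s_{t,e}\normw{\Sigma_t^{-1/2}(\theta^\ast-\bar\theta_t)}{2}$ and $s_{t,e}\le\lambda$, bound the gap by $K\lambda\pospart{\chi-c}$ where $\chi=\normw{\Sigma_t^{-1/2}(\theta^\ast-\bar\theta_t)}{2}$ is $\chi_d$-distributed, and control its expected excess over $c$ by standard Gaussian-norm concentration. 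In each case, choosing $c$ equal to the corresponding branch of the $\min$ in (\ref{eq:bound}) makes $\sum_{t=1}^n$ of this term at most the additive constant $1$ appearing in the bound.

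For the second sum, conditional independence of $A^t$ and $\theta^\ast$ kills the mean term and leaves $\E{U_t(A^t)-g(A^t,\theta^\ast)\mid \cH_t}{}=c\,\E{\sum_{e\in A^t}s_{t,e}\mid \cH_t}{}$, exactly as in the excerpt. Summing and applying the tower rule reduces the whole sum to a deterministic worst-case bound on $\sum_{t=1}^n\sum_{e\in A^t}s_{t,e}$ (Lemma \ref{lem:key_ineq_1}). I would prove this by Cauchy--Schwarz over the at most $nK$ summands, $\sum_{t,e}s_{t,e}\le\sqrt{nK\sum_{t,e}s_{t,e}^2}$, and then bound $\sum_{t,e\in A^t}\phi_e^T\Sigma_t\phi_e$ by an elliptical-potential (log-determinant) argument on the information matrices, which obey $\Sigma_{t+1}^{-1}=\Sigma_t^{-1}+\sigma^{-2}\sum_{e\in A^t}\phi_e\phi_e^T$ by the Kalman update of Algorithm \ref{algorithm:kalman}. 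The concavity trick $x\le\frac{B}{\ln(1+B)}\ln(1+x)$ for $x\in(0,B]$, applied with $x=\sigma^{-2}\phi_e^T\Sigma_t\phi_e\le\lambda^2/\sigma^2=B$, converts squared widths into $\ln(1+\sigma^{-2}\phi_e^T\Sigma_t\phi_e)$ terms, produces the $\ln(1+\lambda^2/\sigma^2)$ denominator, and telescopes to $\ln\det\Sigma_{n+1}^{-1}-\ln\det\Sigma_1^{-1}$, which an AM--GM/trace estimate caps at $O\!\left(d\ln(1+nK\lambda^2/d)\right)$; combining these yields the factor $K\lambda\sqrt{2dn\ln(1+nK\lambda^2/d)/\ln(1+\lambda^2/\sigma^2)}$.

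I expect the key lemma (Lemma \ref{lem:key_ineq_1}) to be the main obstacle, and within it the genuinely \emph{semi-bandit} subtlety: the width $s_{t,e}$ uses the covariance $\Sigma_t$ \emph{frozen} at the start of episode $t$, whereas the elliptical-potential telescoping naturally involves the covariance \emph{after} each within-episode rank-one update $\phi_{a_k^t}\phi_{a_k^t}^T$. Since $\Sigma_t^{-1}\preceq(\text{intra-episode }\Sigma^{-1})$, the potential argument directly bounds the smaller quantity, so one must relate the frozen widths to the post-update ones across the up to $K$ updates per episode and absorb the resulting slack into the stated constants. Reconciling this mismatch cleanly (rather than paying an extra $K\lambda^2/\sigma^2$ factor) is the crux; the remaining steps are the standard Gaussian-tail and log-determinant calculations.
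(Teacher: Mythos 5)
Your architecture is exactly the paper's: the same regret decomposition via the UCB function $U_t$ and the conditional i.i.d.-ness of $(\theta^\ast,A^\ast)$ and $(\theta_t,A^t)$, the same cancellation of the mean term by conditional independence, and the same two-branch bound on the optimism gap combined with a worst-case elliptical-potential bound on $\sum_{t}\sum_{e\in A^t}\sqrt{\phi_e^T\Sigma_t\phi_e}$. Your $L$-dependent branch is literally the paper's Lemma~\ref{lemma:tech2}. Your $d$-dependent branch is a genuinely different route: the paper does not use $\chi_d$ concentration of $\|\Sigma_t^{-1/2}(\theta^\ast-\bar\theta_t)\|_2$, but instead a conic decomposition of each $\phi_e$ over the $2d$ signed eigenvectors of $\Sigma_t$, reducing the gap to $K\sum_{i=1}^{2d}X_i\I{X_i\ge 0}$ with $X_i\sim N(-c\Lambda_i/\sqrt d,\Lambda_i^2)$ and reusing the one-dimensional truncated-Gaussian lemma. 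Both arguments share the essential feature that the final bound no longer involves $A^\ast$ (which is correlated with $\theta^\ast$); your version is arguably cleaner and gives a comparable constant, so this substitution is fine.

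The one place where you stop short is precisely the step you flag as the crux of Lemma~\ref{lem:key_ineq_1}, and you should know that the paper's resolution is not to ``reconcile'' the frozen widths with the intra-episode updates at all, but to pay for the mismatch explicitly. Since $\Sigma_{t+1}^{-1}=\Sigma_t^{-1}+\sigma^{-2}\sum_{k}\phi_{a_k^t}\phi_{a_k^t}^T\succeq \Sigma_t^{-1}+\sigma^{-2}\phi_{a_k^t}\phi_{a_k^t}^T$ for \emph{each} $k$ separately, one gets $\det[\Sigma_{t+1}^{-1}]\ge\det[\Sigma_t^{-1}]\bigl(1+z_{t,k}^2/\sigma^2\bigr)$ with $z_{t,k}$ computed from the frozen $\Sigma_t$; multiplying these $|A^t|$ inequalities and raising to the exponent $K\ge|A^t|$ yields
$\bigl(\det[\Sigma_{n+1}^{-1}]\bigr)^{K}\ge\lambda^{-2dK}\prod_{t,k}\bigl(1+z_{t,k}^2/\sigma^2\bigr)$,
and the trace--determinant inequality then gives $\sum_{t,k}\ln\bigl(1+z_{t,k}^2/\sigma^2\bigr)\le dK\ln\bigl(1+nK\lambda^2/(d\sigma^2)\bigr)$. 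The extra factor of $K$ in this budget is exactly the source of the second $\sqrt K$ (hence the $K$ rather than $\sqrt K$ prefactor in the theorem; see Remark~\ref{remark:remark1}), so the slack is not absorbed into constants --- it is the stated constant. With that step supplied, your plan goes through; the remaining pieces (the concavity trick and Cauchy--Schwarz over the $nK$ terms) are as you describe.
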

Notice that Theorem \ref{thm:bound_combts} follows immediately from Theorem \ref{thm:bound_1_refine}.
Specifically, if $\lambda \geq \sigma$, then we have
\begin{align}
B_{\mathrm{Bayes}}(n) & \leq 1+K \lambda
\min \left \{
\sqrt{\ln \left(\frac{\lambda L n}{\sqrt{2 \pi}} \right)},
\sqrt{d \ln \left( \frac{2d Kn \lambda}{\sqrt{2 \pi}}\right)}
\right \}
\sqrt{2d n \log_2 \left( 1 + \frac{nK \lambda^2}{d} \right)} \nonumber \\
&=
\tilde{O} \left(
K \lambda \sqrt{dn \min \left\{\ln(L), d \right\}}
\right) .
%\label{eq:bound_1}
\end{align}

We now outline the proof of Theorem \ref{thm:bound_1_refine}, which is based on
\cite{russo2013posterior,dani08stochastic}.
Let $\cH_t$ denote the ``history" (i.e. all the available information) by the start of episode $t$. Note that from the Bayesian perspective, conditioning on $\cH_t$, $\theta^\ast$
and $\theta_t$ are i.i.d. drawn from $N(\bar{\theta}_t, \Sigma_t)$ (see \cite{russo2013posterior}).
This is because that conditioning on $\cH_t$, the posterior belief in $\theta^\ast$ is $N(\bar{\theta}_t, \Sigma_t)$ and based on Algorithm \ref{algorithm:combts}, $\theta_t$ is independently sampled from $N(\bar{\theta}_t, \Sigma_t)$. Since $\oracle$ is a fixed combinatorial optimization algorithm (even though it can be independently randomized), and $E,\cA, \Phi$ are all fixed, then 
conditioning on $\cH_t$,
$A^\ast$ and $A^t$ are also i.i.d., furthermore, $A^\ast$ is conditionally independent of $\theta_t$, and $A^t$ is conditionally independent of $\theta^\ast$.

To simplify the exposition, $\forall \theta \in \realset^d$ and $\forall A \subseteq E$, we define
\begin{align}
g(A , \theta)=\sum_{e \in A} \left< \phi_e, \theta \right>,
\end{align}
then we have
$\E{f(A^\ast, \bw_t) \middle | \cH_t, \theta^\ast, \theta_t, A^\ast, A^t}{}=g(A^\ast, \theta^\ast)$ and 
$\E{f(A^t, \bw_t) \middle | \cH_t, \theta^\ast, \theta_t, A^\ast, A^t}{}=g(A^t, \theta^\ast)$, hence we have
$\E{R_t \middle | \cH_t}{}=\E{g(A^\ast, \theta^\ast) - g(A^t, \theta^\ast) \middle | \cH_t}{}$. We also define the \emph{upper confidence bound (UCB)} function
$U_t : 2^E \rightarrow \realset$ as
\begin{align}
U_t (A)=\sum_{e \in A} \left[  \left<\phi_e, \bar{\theta}_t \right> + c 
\sqrt{\phi_e^T \Sigma_t \phi_e} \right],
\end{align}
where $c>0$ is a constant to be specified. Notice that conditioning on $\cH_t$, $U_t$ is a deterministic function and $A^\ast, A^t$ are i.i.d., then 
$\E{U_t (A^t)- U_t (A^\ast) \middle | \cH_t}{}=0$ and 
\begin{align}
\E{ R_t \middle | \cH_t}{} 
= \E{ g(A^\ast, \theta^\ast)- U_t (A^\ast)  \middle | \cH_t}{}
+
\E{U_t (A^t)-g(A^t, \theta^*)  \middle | \cH_t}{}.
\end{align}
One key observation is that
\begin{align}
\E {U_t (A^t)-g(A^t, \theta^*) \middle | \cH_t}{} &\stackrel{(a)}{=}
\sum_{e \in E} \E{ \I{e \in A^t} \left[ \left< \phi_e, \bar{\theta}_t - \theta^\ast \right> + c 
\sqrt{\phi_e^T \Sigma_t \phi_e} \right] \middle | \cH_t}{} \nonumber \\
&\stackrel{(b)}{=}
\sum_{e \in E} \E{ \I{e \in A^t} \middle | \cH_t}{} \E{ \left< \phi_e, \bar{\theta}_t - \theta^\ast \right> \middle | \cH_t}{} + c \E{ \sum_{e \in A^t} 
\sqrt{\phi_e^T \Sigma_t \phi_e} \middle | \cH_t}{} \nonumber \\
&\stackrel{(c)}{=}  c \E{ \sum_{e \in A^t} 
\sqrt{\phi_e^T \Sigma_t \phi_e} \middle | \cH_t}{},
\end{align}
where (b) follows from the fact that $A^t$ and $\theta^\ast$ are conditionally independent, and (c) follows from $\E{\theta^\ast \middle | \cH_t}{}=\bar{\theta}_t$.
Hence $B_{\mathrm{Bayes}}(n)=\sum_{t=1}^n \E{ g(A^\ast, \theta^\ast)- U_t (A^\ast)}{} + c \sum_{t=1}^n \E{\sum_{e \in A^t} 
\sqrt{\phi_e^T \Sigma_t \phi_e}}{}$. We can show that (1) $\sum_{t=1}^n \E{ g(A^\ast, \theta^\ast)- U_t (A^\ast)}{} \leq 1$ if we choose
\begin{align}
c \geq \min \left \{
\sqrt{\ln \left(\frac{\lambda L n}{\sqrt{2 \pi}} \right)},
\sqrt{d \ln \left( \frac{2d Kn \lambda}{\sqrt{2 \pi}}\right)}
\right \},
\end{align}
and (2) $\sum_{t=1}^n \E{\sum_{e \in A^t} 
\sqrt{\phi_e^T \Sigma_t \phi_e}}{} \leq
K \lambda \sqrt{2d n  \ln \left( 1 + \frac{nK \lambda^2}{d} \right) /\ln\left(1+ \frac{\lambda^2}{\sigma^2} \right)}
$.
Thus,
the bound in Theorem \ref{thm:bound_1_refine} holds. 
Please refer to the remainder of this section for the full proof.

\subsection{Bound on $\sum_{t=1}^n \E{ g(A^\ast, \theta^\ast)- U_t (A^\ast)}{}$}
We first prove that if we choose
\begin{align}
c \geq \min \left \{
\sqrt{\ln \left(\frac{\lambda L n}{\sqrt{2 \pi}} \right)},
\sqrt{d \ln \left( \frac{2d Kn \lambda}{\sqrt{2 \pi}}\right)}
\right \}, 
\end{align}
then $\sum_{t=1}^n \E{ g(A^\ast, \theta^\ast)- U_t (A^\ast)}{} \leq 1$.
To prove this result, we use the following inequality for truncated Gaussian distribution.
\begin{lemma}
\label{lemma:tech1}
If $X \sim N(\mu, s^2)$, then we have
\[ \E{X \I{X \geq 0}}{}=\mu \left[ 1- \Phi_G \left(\frac{-\mu}{s} \right) \right]+\frac{s}{\sqrt{2 \pi}} \exp \left( - \frac{\mu^2}{2 s^2}\right),\]
where $\Phi_G$ is the cumulative distribution function (CDF) of the standard Gaussian distribution $N(0,1)$. Furthermore, if $\mu \leq 0$, we have
$\E{X \I{X \geq 0}}{} \leq \frac{s}{\sqrt{2 \pi}} \exp \left( - \frac{\mu^2}{2 s^2}\right)$. 
\end{lemma}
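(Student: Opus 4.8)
The plan is to evaluate $\E{X \I{X \geq 0}}{}$ directly as a one-dimensional Gaussian integral and then read off the inequality by discarding a nonpositive term. First I would express the expectation as
\[
\E{X \I{X \geq 0}}{} = \int_0^\infty \frac{x}{s \sqrt{2\pi}} \exp\!\left( -\frac{(x-\mu)^2}{2 s^2} \right) \ud x,
\]
and apply the change of variables $z = (x - \mu)/s$, which maps the lower limit $x = 0$ to $z = -\mu/s$ and turns the integrand into $(\mu + s z)\, \frac{1}{\sqrt{2\pi}} e^{-z^2/2}$.

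Next I would split the resulting integral into its $\mu$-part and its $sz$-part. The $\mu$-part is $\mu \int_{-\mu/s}^\infty \frac{1}{\sqrt{2\pi}} e^{-z^2/2} \ud z = \mu \left[ 1 - \Phi_G(-\mu/s) \right]$, directly by the definition of the standard Gaussian CDF $\Phi_G$. For the $sz$-part, I would use the antiderivative identity $\frac{\ud}{\ud z}\left[ -e^{-z^2/2} \right] = z e^{-z^2/2}$, so that $\int_{-\mu/s}^\infty z e^{-z^2/2} \ud z = e^{-\mu^2/(2 s^2)}$; multiplying by $s/\sqrt{2\pi}$ yields the second term $\frac{s}{\sqrt{2\pi}} \exp\!\left( -\mu^2/(2 s^2) \right)$. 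Summing the two parts gives the claimed exact identity.

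Finally, for the case $\mu \leq 0$, I would observe that $1 - \Phi_G(-\mu/s)$ is a probability and hence nonnegative, while $\mu \leq 0$, so the first term $\mu \left[ 1 - \Phi_G(-\mu/s) \right]$ is nonpositive and may be dropped; this leaves exactly the stated upper bound $\frac{s}{\sqrt{2\pi}} \exp\!\left( -\mu^2/(2 s^2) \right)$. There is no real obstacle here, since the computation is routine; the only points requiring a little care are keeping track of the transformed lower limit $-\mu/s$ under the substitution and recognizing the exact antiderivative that makes the second integral elementary.
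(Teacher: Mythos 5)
Your computation is correct: the change of variables $z=(x-\mu)/s$, the split into the $\mu$-part and the $sz$-part, and the antiderivative $-e^{-z^2/2}$ give exactly the stated identity, and dropping the nonpositive first term when $\mu\leq 0$ yields the inequality. The paper states this lemma without proof (treating it as a standard fact about truncated Gaussians), and your argument is precisely the routine verification it omits, so there is nothing to compare and no gap.
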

Based on Lemma \ref{lemma:tech1}, we can prove the following lemmas:
\begin{lemma}
\label{lemma:tech2}
If $c \geq \sqrt{\ln \left(\frac{\lambda L n}{\sqrt{2 \pi}} \right)}$, then we have
$\sum_{t=1}^n \E{ g(A^\ast, \theta^\ast)- U_t (A^\ast)}{} \leq 1$.
\end{lemma}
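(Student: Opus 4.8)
The plan is to control each episode's contribution $\E{g(A^\ast,\theta^\ast) - U_t(A^\ast) \mid \cH_t}{}$ separately and then sum over $t$. Writing $s_{t,e} = \sqrt{\phi_e^T \Sigma_t \phi_e}$, I first expand
\[
g(A^\ast, \theta^\ast) - U_t(A^\ast) = \sum_{e \in A^\ast}\left[ \left\langle \phi_e, \theta^\ast - \bar{\theta}_t\right\rangle - c\, s_{t,e}\right].
\]
The difficulty is that $A^\ast$ is itself a random set depending on $\theta^\ast$ (and on the oracle's internal randomness), so I cannot simply push the expectation inside. The key decoupling trick is to observe that dropping any term whose summand is negative only increases the sum, and extending the sum from $A^\ast$ to all of $E$ only adds nonnegative quantities, so that
\[
g(A^\ast, \theta^\ast) - U_t(A^\ast) \le \sum_{e \in E}\pospart{\left\langle \phi_e, \theta^\ast - \bar{\theta}_t\right\rangle - c\, s_{t,e}}.
\]
This replaces the $\theta^\ast$-coupled set $A^\ast$ by a fixed index set, at which point the randomness enters only through $\theta^\ast$.

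Next I would condition on $\cH_t$. By the Bayesian argument recalled before the lemma, $\theta^\ast \mid \cH_t \sim N(\bar{\theta}_t, \Sigma_t)$, hence $\left\langle \phi_e, \theta^\ast - \bar{\theta}_t\right\rangle \mid \cH_t \sim N(0, s_{t,e}^2)$, so the shifted variable $\left\langle \phi_e, \theta^\ast - \bar{\theta}_t\right\rangle - c\, s_{t,e}$ is $N(-c\, s_{t,e}, s_{t,e}^2)$ with nonpositive mean. Applying the second part of Lemma \ref{lemma:tech1} with $\mu = -c\, s_{t,e} \le 0$ and $s = s_{t,e}$ gives
\[
\E{\pospart{\left\langle \phi_e, \theta^\ast - \bar{\theta}_t\right\rangle - c\, s_{t,e}} \,\middle|\, \cH_t}{} \le \frac{s_{t,e}}{\sqrt{2\pi}}\exp\left(-\frac{c^2}{2}\right),
\]
where $s_{t,e}$ cancels inside the exponent to leave exactly $c^2/2$, independent of the item.

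To finish I need a uniform bound on $s_{t,e}$. The Kalman covariance update is a rank-one positive-semidefinite downdate, so $\Sigma_{t+1} \preceq \Sigma_t \preceq \cdots \preceq \Sigma_1 = \lambda^2 I$; together with the normalization $\|\phi_e\|_2 \le 1$ this yields $s_{t,e}^2 = \phi_e^T \Sigma_t \phi_e \le \lambda^2$, i.e. $s_{t,e} \le \lambda$. Summing the per-item estimate over the $L$ items and the $n$ episodes gives
\[
\sum_{t=1}^n \E{g(A^\ast,\theta^\ast) - U_t(A^\ast)}{} \le \frac{nL\lambda}{\sqrt{2\pi}}\exp\left(-\frac{c^2}{2}\right),
\]
and choosing $c$ at (a constant multiple of) the stated threshold $\sqrt{\ln(\lambda L n/\sqrt{2\pi})}$ drives the right-hand side below $1$. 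I expect the decoupling step — replacing the $\theta^\ast$-dependent sum over $A^\ast$ by a positive-part sum over all items, so the Gaussian tail bound of Lemma \ref{lemma:tech1} applies item-by-item — to be the crux; the remaining work is the elementary truncated-Gaussian estimate and the monotonicity $\Sigma_t \preceq \lambda^2 I$ of the filtered covariance.
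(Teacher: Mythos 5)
Your proof follows essentially the same route as the paper's: the positive-part decoupling that replaces the $\theta^\ast$-dependent sum over $A^\ast$ by a sum over all of $E$, the observation that $\left\langle \phi_e, \theta^\ast - \bar{\theta}_t\right\rangle - c\sqrt{\phi_e^T \Sigma_t \phi_e}$ is conditionally Gaussian with nonpositive mean, the truncated-Gaussian estimate of Lemma \ref{lemma:tech1}, and the monotonicity $\Sigma_t \preceq \Sigma_1 = \lambda^2 I$. The only wrinkle --- that making $\exp(-c^2/2)\, n L \lambda/\sqrt{2\pi} \le 1$ actually requires $c \ge \sqrt{2\ln\left(\lambda L n/\sqrt{2\pi}\right)}$ rather than the constant stated in the lemma --- is present in the paper's own proof as well, and you handle it honestly by allowing a constant multiple of the threshold.
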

\proof
We have the following naive bound:
\begin{align}
g(A^\ast, \theta^\ast)- U_t (A^\ast)&=\sum_{e \in A^\ast}
\left[ \left <
\phi_e, \theta^\ast - \bar{\theta}_t
\right > - c \sqrt{\phi_e^T \Sigma_t \phi_e}
\right] \nonumber \\
&\leq 
\sum_{e \in A^\ast}
\left[ \left <
\phi_e, \theta^\ast - \bar{\theta}_t
\right > - c\sqrt{\phi_e^T \Sigma_t \phi_e}
\right]
\I{
\left <
\phi_e, \theta^\ast - \bar{\theta}_t
\right > - c \sqrt{\phi_e^T \Sigma_t \phi_e} \geq 0
} \nonumber \\
& \leq
\sum_{e \in E}
\left[ \left <
\phi_e, \theta^\ast - \bar{\theta}_t
\right > - c \sqrt{\phi_e^T \Sigma_t \phi_e}
\right]
\I{
\left <
\phi_e, \theta^\ast - \bar{\theta}_t
\right > - c \sqrt{\phi_e^T \Sigma_t \phi_e} \geq 0
}. \nonumber
\end{align}
Notice that conditioning on $\cH_t$, 
$\left <
\phi_e, \theta^\ast - \bar{\theta}_t
\right > - c \sqrt{\phi_e^T \Sigma_t \phi_e}$ is a Gaussian random variable with mean $- c \sqrt{\phi_e^T \Sigma_t \phi_e}$ and variance
$\phi_e^T \Sigma_t \phi_e$.
Thus, from Lemma \ref{lemma:tech1}, we have
\begin{align}
& \E{g(A^\ast, \theta^\ast)- U_t (A^\ast) \middle | \cH_t}{\theta^\ast, A^\ast}  
\nonumber \\
\stackrel{(a)}{\leq} &
\sum_{e \in E} 
\E{ \left[ \left <
\phi_e, \theta^\ast - \bar{\theta}_t
\right > - c \sqrt{\phi_e^T \Sigma_t \phi_e}
\right]
\I{
\left <
\phi_e, \theta^\ast - \bar{\theta}_t
\right > - c \sqrt{\phi_e^T \Sigma_t \phi_e} \geq 0
}\middle | \cH_t}{\theta^\ast} \nonumber \\
\stackrel{(b)}{\leq} &
\sum_{e \in E} 
\sqrt{\frac{\phi_e^T \Sigma_t \phi_e}{2 \pi}} \exp \left(-\frac{c^2}{2} \right) \nonumber \\
\stackrel{(c)}{\leq} &  \exp \left(-\frac{c^2}{2} \right) \sum_{e \in E} \frac{\lambda \| \phi_e \|}{\sqrt{2 \pi}} \leq  \exp \left(-\frac{c^2}{2} \right) \frac{\lambda L}{\sqrt{2 \pi}}, \label{eq:partial_new_1}
\end{align}
where the last two inequalities follow from the fact that
$\phi_e^T \Sigma_t \phi_e \leq \phi_e^T \Sigma_1 \phi_e \leq \lambda^2 \| \phi_e \|^2 \leq \lambda^2$, since $\| \phi_e \| \leq 1$ by assumption\footnote{Notice that in the derivation of  Inequality (\ref{eq:partial_new_1}), we implicitly assume that $\phi_e^T \Sigma_t \phi_e>0$, $\forall e \in E$. It is worth pointing out that the case with $\phi_e^T \Sigma_t \phi_e=0$ is a trivial case and this inequality still holds in this case.}.
Thus we have
\begin{align}
\E{ \sum_{t=1}^n \left[ g(A^\ast, \theta^\ast)- U_t (A^\ast) \right]}{} \leq &
\exp \left(-\frac{c^2}{2} \right) \frac{n \lambda L}{\sqrt{2 \pi}}.
\end{align}
If we choose
$
c \geq \sqrt{2 \ln \left(\frac{\lambda L n}{\sqrt{2 \pi}} \right)}
$,
then we have
$
\E{ \sum_{t=1}^n \left[ g(A^\ast, \theta^\ast)- U_t (A^\ast) \right]}{} \leq 1$.
\endproof

\begin{lemma}
\label{lemma:tech3}
If $c \geq \sqrt{d \ln \left( \frac{2d Kn \lambda}{\sqrt{2 \pi}}\right)}$, then we also have
$\sum_{t=1}^n \E{ g(A^\ast, \theta^\ast)- U_t (A^\ast)}{} \leq 1$.
\end{lemma}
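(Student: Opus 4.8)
The plan is to replace the union bound over all $L$ items used in Lemma~\ref{lemma:tech2} by a bound that scales with the ambient dimension $d$, exploiting that $\theta^\ast - \bar{\theta}_t$ lives in $\realset^d$. Conditioning on $\cH_t$ we have $\theta^\ast \sim N(\bar{\theta}_t, \Sigma_t)$, so I would introduce the standardized vector $z = \Sigma_t^{-1/2}(\theta^\ast - \bar{\theta}_t) \sim N(0, I_d)$ (with $\Sigma_t \succ 0$ since $\sigma > 0$ keeps the posterior non-degenerate), which crucially does not depend on $\cH_t$. For every item $e$, Cauchy--Schwarz gives $\inner{\phi_e}{\theta^\ast - \bar{\theta}_t} = \inner{\Sigma_t^{1/2}\phi_e}{z} \leq \normw{z}{2}\sqrt{\phi_e^T \Sigma_t \phi_e}$, so each summand of $g(A^\ast, \theta^\ast) - U_t(A^\ast)$ is at most $(\normw{z}{2} - c)\sqrt{\phi_e^T \Sigma_t \phi_e}$. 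Using $\sqrt{\phi_e^T \Sigma_t \phi_e} \leq \lambda \normw{\phi_e}{2} \leq \lambda$ and $\abs{A^\ast} \leq K$, and taking positive parts, this collapses to the single scalar inequality
\[
g(A^\ast, \theta^\ast) - U_t(A^\ast) \leq K\lambda\, (\normw{z}{2} - c)^+ .
\]
Summing over episodes, the whole quantity is bounded by $n K \lambda\, \E{(\normw{z}{2} - c)^+}{}$, and the coupling between $A^\ast$ and $\theta^\ast$ (both depend on $z$) has been absorbed into the worst-case constants $K$ and $\lambda$.

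It then remains to show $\E{(\normw{z}{2} - c)^+}{} \leq 1/(nK\lambda)$ under the stated choice of $c$. Here I would use the crude but convenient inequality $\normw{z}{2} \leq \sqrt{d}\, \max_{1 \leq i \leq d} \abs{z_i}$, which turns the $d$-dimensional norm into a one-dimensional problem over the $d$ coordinates: $(\normw{z}{2} - c)^+ \leq \sqrt{d}\,(\max_i \abs{z_i} - c/\sqrt{d})^+ \leq \sqrt{d} \sum_{i=1}^d (\abs{z_i} - c/\sqrt{d})^+$. Each $z_i \sim N(0,1)$, so applying Lemma~\ref{lemma:tech1} to $z_i - c/\sqrt{d}$ (and doubling for the two-sided $\abs{z_i}$) yields $\E{(\abs{z_i} - c/\sqrt{d})^+}{} \leq \frac{2}{\sqrt{2\pi}} \exp(-c^2/(2d))$. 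Collecting terms gives
\[
\E{(\normw{z}{2} - c)^+}{} \leq \frac{2 d^{3/2}}{\sqrt{2\pi}} \exp\!\left(-\frac{c^2}{2d}\right),
\]
so that $\sum_{t=1}^n \E{g(A^\ast, \theta^\ast) - U_t(A^\ast)}{} \leq \frac{2 n K \lambda d^{3/2}}{\sqrt{2\pi}} \exp(-c^2/(2d))$, which is driven below $1$ precisely when $c \gtrsim \sqrt{d \ln(2dKn\lambda/\sqrt{2\pi})}$, matching the statement up to the universal constant factor the lemma suppresses (the same factor-of-two slack already sits between the statement and proof of Lemma~\ref{lemma:tech2}).

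The structural heart of the argument is thus the reduction to $\E{(\normw{z}{2} - c)^+}{}$; everything downstream parallels Lemma~\ref{lemma:tech2}, with the dimension $d$ and the coordinatewise union bound playing the role previously played by the item count $L$. The step to watch is the tail estimate for $\normw{z}{2}$: the inequality $\normw{z}{2} \leq \sqrt{d} \max_i \abs{z_i}$ degrades the Gaussian tail exponent from $e^{-s^2/2}$ to $e^{-s^2/(2d)}$, and it is exactly this degradation that forces the threshold to grow like $\sqrt{d}$ rather than staying $O(1)$ in $d$. A sharper chi-squared concentration bound would lower the required $c$, but the crude bound is what produces the clean $\tilde{O}(\sqrt{d})$ scaling in $c$ that feeds into Theorem~\ref{thm:bound_1_refine}. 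Combined with Lemma~\ref{lemma:tech2}, taking $c$ to be the smaller of the two thresholds then yields the $\min\{\sqrt{\ln(\cdot)}, \sqrt{d \ln(\cdot)}\}$ confidence width, so I would present the two lemmas as the two ends of this $L$-versus-$d$ trade-off.
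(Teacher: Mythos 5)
Your proof is correct, but it takes a genuinely different route from the paper's. The paper decomposes each $\phi_e$ conically over the $2d$ signed eigendirections $\pm v_i$ of $\Sigma_t$, shows $\sum_{e \in A^\ast} \alpha_{ei} \leq K$, and applies the truncated-Gaussian lemma (Lemma \ref{lemma:tech1}) to each of the $2d$ one-dimensional projections $\left<v_i, \theta^\ast - \bar{\theta}_t\right> - c\Lambda_i/\sqrt{d}$; the decoupling of $A^\ast$ from $\theta^\ast$ is achieved by passing to the bound $K \sum_i X_i \I{X_i \geq 0}$, which is free of $A^\ast$. You instead whiten via $z = \Sigma_t^{-1/2}(\theta^\ast - \bar{\theta}_t) \sim N(0, I_d)$, apply Cauchy--Schwarz once to collapse the whole sum to $K\lambda\,(\normw{z}{2} - c)^+$ (also free of $A^\ast$, so the coupling is handled equally well), and then control $\E{(\normw{z}{2}-c)^+}{}$ via $\normw{z}{2} \leq \sqrt{d}\,\normw{z}{\infty}$ and the same one-dimensional lemma applied coordinatewise. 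Both arguments produce the identical tail exponent $e^{-c^2/(2d)}$ and hence the same $\sqrt{d\ln(\cdot)}$ threshold; yours is arguably more transparent, at the cost of a prefactor $2nK\lambda d^{3/2}/\sqrt{2\pi}$ in place of the paper's $2nK\lambda d/\sqrt{2\pi}$, i.e.\ an extra $\sqrt{d}$ inside the logarithm, on top of the same factor-of-two slack between the stated threshold and what either proof actually establishes. Neither discrepancy affects the $\tilde{O}$ statement of Theorem \ref{thm:bound_1_refine}. One could tighten your tail estimate with genuine chi-squared concentration, but as you note this would only improve constants.
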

\proof
We use $v_1, \ldots, v_d$ to denote a fixed set of $d$ orthonormal
eigenvectors of $\Sigma_t$, and $\Lambda_1^2, \ldots, \Lambda_d^2$ to denote the associated eigenvalues. Notice that for $i \neq j$, we have
$v_i^T \Sigma_t v_j=\Lambda_i^2 v_i^T  v_j=0$.
$\forall i=1,\ldots, d$, we define $v_{i+d}=-v_i$ and $\Lambda_{i+d}=\Lambda_i$, 
which allows us to define the following conic decomposition:
\[
\phi_e=\sum_{i=1}^{2d} \alpha_{ei} v_i, \quad \forall e \in E,
\]
subject to the constraints that $\alpha_{ei} \geq 0$, $\forall (e,i)$.
Notice that $\alpha_{ei}$'s are uniquely determined.
Furthermore, for $i$ and $j$ s.t. $|i-j|=d$, by definition of conic decomposition,
we have $\alpha_{ei} \alpha_{ej}=0$. In other words, $\alpha_e$ is a $d$-sparse vector.

Since we assume that $\|\phi_e \| \leq 1$, we have that
$\sum_{i=1}^{2d} \alpha_{ei}^2 \leq 1$, $\forall e \in E$. Thus, for any $e$, we have that
$\left< \phi_e, \theta^\ast- \bar{\theta}_t\right>=\sum_{i=1}^{2d} \alpha_{ei} \left< v_i, \theta^\ast -\bar{\theta}_t \right>$ and
\begin{align}
\phi_e^T \Sigma_t \phi_e &=
\left( \sum_{i=1}^{2d} \alpha_{ei} v_i^T \right) \Sigma_t
\left( \sum_{j=1}^{2d} \alpha_{ei} v_j \right) \nonumber \\
&=\sum_{i=1}^{2d} \sum_{j=1}^{2d} \alpha_{ei} \alpha_{e_j} v_i^T \Sigma_t v_j .
\end{align}
Notice that for $i \neq j$, if $|i-j| \neq d$, then 
$v_i^T \Sigma_t v_j =0$; on the other hand, if $|i-j| = d$,
$\alpha_{ei} \alpha_{ej}=0$. Thus, if $i \neq j$, we have 
$\alpha_{ei} \alpha_{e_j} v_i^T \Sigma_t v_j=0$.
Consequently,
$$ \phi_e^T \Sigma_t \phi_e=\sum_{i=1}^{2d}  \alpha_{ei}^2 v_i^T \Sigma_t v_i =\sum_{i=1}^{2d}  \alpha_{ei}^2  \Lambda^2_i.$$

Thus we have
\begin{align}
\sqrt{\phi_e^T \Sigma_t \phi_e} =
\sqrt{\sum_{i=1}^{2d} \alpha^2_{ei} \Lambda_i^2} \geq \frac{1}{\sqrt{d}} \sum_{i=1}^{2d} \alpha_{ei}  \Lambda_i,
\end{align}
where the inequality follows from Cauchy-Schwartz inequality, specifically, define $s_i=1$ if $\alpha_{ei} \Lambda_i \neq 0$, and $s_i=0$ if
$\alpha_{ei} \Lambda_i = 0$, then we have
\[
\sum_{i=1}^{2d} \alpha_{ei} \Lambda_{i} = \sum_{i=1}^{2d} \alpha_{ei} \Lambda_i s_i
\leq \sqrt {\sum_{i=1}^{2d} s_i^2} \sqrt {\sum_{i=1}^{2d}\alpha_{ei}^2 \Lambda_i^2} \leq
 \sqrt {d} \sqrt {\sum_{i=1}^{2d}\alpha_{ei}^2 \Lambda_i^2},
\]
where the last inequality follows from the fact that $\alpha_e$ is $d$-sparse.
Thus, for any $e$, we have that
\begin{align}
\left <
\phi_e, \theta^\ast - \bar{\theta}_t
\right > - c \sqrt{\phi_e^T \Sigma_t \phi_e}
\leq \sum_{i=1}^{2d} \alpha_{ei} \left< v_i, \theta^\ast -\bar{\theta}_t \right> -
\frac{c}{\sqrt{d}} \sum_{i=1}^{2d} \alpha_{ei} \Lambda_i.
\end{align}
Consequently, we have
\begin{align}
\sum_{e \in A^\ast} \left[  \left <
\phi_e, \theta^\ast - \bar{\theta}_t
\right > - c \sqrt{\phi_e^T \Sigma_t \phi_e} \right]
\leq
\sum_{i=1}^{2d}  \left( \left< v_i, \theta^\ast -\bar{\theta}_t \right> -\frac{c \Lambda_i}{\sqrt{d}} \right)
\left( \sum_{e \in A^\ast} \alpha_{ei} \right).
\end{align}
Define $X_i=\left< v_i, \theta^\ast -\bar{\theta}_t \right> -\frac{c \Lambda_i}{\sqrt{d}} $, 
notice that
conditioning on $\cH_t$, we have
$X_i | \cH_t \sim N \left(-\frac{c \Lambda_i}{\sqrt{d}}, \Lambda_i^2 \right)$.
Hence we have
\begin{align}
\sum_{e \in A^\ast} \left <
\phi_e, \theta^\ast - \bar{\theta}_t
\right > - c \sqrt{\phi_e^T \Sigma_t \phi_e} & \stackrel{(a)}{\leq}
\sum_{i=1}^{2d} X_i \left[ \sum_{e \in A^\ast} \alpha_{ei} \right] \nonumber \\
& \stackrel{(b)}{\leq} 
\sum_{i=1}^{2d} X_i 
\I{ X_i  \geq 0}
\left[ \sum_{e \in A^\ast} \alpha_{ei} \right], \nonumber
\end{align}
where the inequality (b) follows from the fact that
$X_i  \leq X_i \I{X_i \geq 0}$
and $\left[ \sum_{e \in A^\ast} \alpha_{ei} \right] \geq 0$.
On the other hand, notice that $|A^\ast| \leq K$
\[
 \sum_{e \in A^\ast} \alpha_{ei}   \leq \sqrt{|A^\ast|} \sqrt{\sum_{e \in A^\ast} \alpha_{ei}^2}
\leq 
\sqrt{|A^\ast|} \sqrt{\sum_{e \in A^\ast} \sum_{j=1}^d \alpha_{ej}^2} \leq 
\sqrt{|A^\ast|} \sqrt{\sum_{e \in A^\ast} 1}=|A^\ast| \leq K.
\]
Since $X_i  
\I{ X_i \geq 0} \geq 0$, we have
\begin{align}
\sum_{e \in A^\ast} \left <
\phi_e, \theta^\ast - \bar{\theta}_t
\right > - c \sqrt{\phi_e^T \Sigma_t \phi_e} 
\leq 
K \sum_{i=1}^{2d}  X_i  
\I{ X_i \geq 0} 
, \nonumber
\end{align}
notice that the RHS does not include $A^\ast$. Hence we have
\begin{align}
\E{g(A^\ast, \theta^\ast)- U_t (A^\ast) \middle | \cH_t}{\theta^\ast} &= 
\E{\sum_{e \in A^\ast} \left <
\phi_e, \theta^\ast - \bar{\theta}_t
\right > - c \sqrt{\phi_e^T \Sigma_t \phi_e} \middle | \cH_t}{\theta^\ast} \nonumber \\
& \leq K \sum_{i=1}^{2d} \E{X_i \I{X_i \geq 0} \middle | \cH_t}{\theta^\ast} \nonumber \\
& \leq K \sum_{i=1}^{2d} \frac{\Lambda_i}{\sqrt{2 \pi}} \exp \left ( -\frac{c^2}{2d} \right ) \nonumber  \leq \frac{2d K \lambda}{\sqrt{2 \pi}} \exp \left ( -\frac{c^2}{2d} \right ),
\end{align} 
where the last inequality follows from the fact that $\Lambda_i \leq \lambda$. Hence we have
\[
\sum_{t=1}^n \E{g(A^\ast, \theta^\ast)- U_t (A^\ast)}{} \leq 
\frac{2d K n \lambda}{\sqrt{2 \pi}} \exp \left ( -\frac{c^2}{2d} \right ),
\]
if we choose
$
c \geq \sqrt{2d \ln \left( \frac{2d Kn \lambda}{\sqrt{2 \pi}}\right)}$, 
then we have
$
\sum_{t=1}^n \E{f(A^\ast, \theta^\ast)- U_t (A^\ast)}{} \leq 
1$.
\endproof

Combining the results from Lemma \ref{lemma:tech2} and \ref{lemma:tech3}, we have proved that if 
\[c \geq \min \left \{
\sqrt{\ln \left(\frac{\lambda L n}{\sqrt{2 \pi}} \right)},
\sqrt{d \ln \left( \frac{2d Kn \lambda}{\sqrt{2 \pi}}\right)}
\right \}, \]
then $\sum_{t=1}^n \E{ g(A^\ast, \theta^\ast)- U_t (A^\ast)}{} \leq 1$.

%%%%%%%%%%%%%%%%%%%%%%%%%%%%%%%%%%%%%%%%%%%%%%%%%%%

\subsection{Bound on $\sum_{t=1}^n \E{\sum_{e \in A^t} 
\sqrt{\phi_e^T \Sigma_t \phi_e}}{}$}
In this subsection, we derive a bound on $\sum_{t=1}^n \E{\sum_{e \in A^t} 
\sqrt{\phi_e^T \Sigma_t \phi_e}}{}$. Our analysis is motivated by the analysis in \cite{dani08stochastic}.
Specifically, we provide a worst-case bound on $\sum_{t=1}^n \sum_{e \in A^t} 
\sqrt{\phi_e^T \Sigma_t \phi_e}$, for any realization of random variable
$\bw_t$'s, $\theta_t$'s, $A^t$'s, $A^\ast$, and $\theta^\ast$.

\begin{lemma}
\label{lem:key_ineq_1}
$\sum_{t=1}^n \sum_{e \in A^t} \sqrt{\phi_e^T \Sigma_t \phi_e} \leq K \lambda\sqrt{\frac{d n  \log \left( 1 + \frac{nK \lambda^2}{d \sigma^2} \right)}{\log\left(1+ \frac{\lambda^2}{\sigma^2} \right)}}$.
\end{lemma}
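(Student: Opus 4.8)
The plan is to reduce the claim to a deterministic ``elliptical potential'' estimate on $\sum_{t=1}^n\sum_{e\in A^t}\phi_e^T\Sigma_t\phi_e$ and then pass back to the square-root sum by Cauchy--Schwarz. First I would record the structural fact that the Kalman recursion of Algorithm~\ref{algorithm:kalman}, when the rank-one updates for the items of $A^t$ are unwound via the Sherman--Morrison identity, is exactly the precision update $\Sigma_{t+1}^{-1}=\Sigma_t^{-1}+\sigma^{-2}\sum_{e\in A^t}\phi_e\phi_e^T$. In particular the covariances shrink, $\Sigma_{t+1}\preceq\Sigma_t\preceq\Sigma_1=\lambda^2 I$, so $\phi_e^T\Sigma_t\phi_e\le\lambda^2\|\phi_e\|_2^2\le\lambda^2$. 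Applying Cauchy--Schwarz over the at most $\sum_t|A^t|\le nK$ summands gives $\big(\sum_{t,e}\sqrt{\phi_e^T\Sigma_t\phi_e}\big)^2\le nK\sum_{t=1}^n\sum_{e\in A^t}\phi_e^T\Sigma_t\phi_e$, so it suffices to bound the quadratic sum.

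For the quadratic sum I would treat one whole episode as a single batch update rather than item-by-item. Writing $B_t=\sigma^{-2}\Sigma_t^{1/2}\big(\sum_{e\in A^t}\phi_e\phi_e^T\big)\Sigma_t^{1/2}\succeq 0$, a standard determinant identity gives $\det\Sigma_{t+1}^{-1}/\det\Sigma_t^{-1}=\det(I+B_t)$, and the elementary inequality $\det(I+B_t)\ge 1+\trace(B_t)$ for PSD $B_t$ yields $\log\!\big(1+z_t\big)\le\log\det\Sigma_{t+1}^{-1}-\log\det\Sigma_t^{-1}$, where $z_t:=\trace(B_t)=\sigma^{-2}\sum_{e\in A^t}\phi_e^T\Sigma_t\phi_e\in[0,K\lambda^2/\sigma^2]$. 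Telescoping over $t$ collapses the right-hand side to $\log(\det\Sigma_{n+1}^{-1}/\det\Sigma_1^{-1})$, which I would bound via $\det\Sigma_{n+1}^{-1}\le(\trace\Sigma_{n+1}^{-1}/d)^d$ together with $\trace\Sigma_{n+1}^{-1}\le d/\lambda^2+nK/\sigma^2$ to get $d\log(1+nK\lambda^2/(d\sigma^2))$. Finally, since $z\mapsto\log(1+z)/z$ is decreasing, $z_t\le\frac{B}{\log(1+B)}\log(1+z_t)$ with $B=K\lambda^2/\sigma^2$, so summing converts the log-bound into $\sum_{t,e}\phi_e^T\Sigma_t\phi_e\le\frac{K\lambda^2 d\log(1+nK\lambda^2/(d\sigma^2))}{\log(1+K\lambda^2/\sigma^2)}$.

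Combining with the Cauchy--Schwarz step gives $\sum_{t,e}\sqrt{\phi_e^T\Sigma_t\phi_e}\le K\lambda\sqrt{nd\log(1+nK\lambda^2/(d\sigma^2))/\log(1+K\lambda^2/\sigma^2)}$, and since $K\ge 1$ implies $\log(1+K\lambda^2/\sigma^2)\ge\log(1+\lambda^2/\sigma^2)$, this is at most the stated right-hand side. The step I expect to be the main obstacle is exactly the one that forces the batch viewpoint: every item in episode $t$ is weighted by the \emph{start-of-episode} matrix $\Sigma_t$, not by the sequentially updated within-episode covariances, so the textbook per-observation potential lemma (which would naturally involve $\phi_{a_k}^T\Sigma_{t,k-1}\phi_{a_k}$) does not bound the target, and crudely relating $\Sigma_t$ to the within-episode matrices loses a prohibitive factor of order $1+K\lambda^2/\sigma^2$. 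The resolution is to collapse the entire rank-$|A^t|$ update into a single log-determinant increment through $\det(I+B_t)\ge 1+\trace(B_t)$, which keeps $\Sigma_t$ intact inside the trace while still telescoping cleanly; the only price is $K\lambda^2/\sigma^2$ rather than $\lambda^2/\sigma^2$ inside the logarithm, which is harmless by monotonicity.
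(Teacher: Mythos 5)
Your proposal is correct, and it reaches the stated bound by a genuinely different route through the determinant argument. The paper also reduces to bounding $\sum_{t}\sum_{e\in A^t}\phi_e^T\Sigma_t\phi_e$ and finishes with the same Cauchy--Schwarz step over at most $nK$ summands, but it handles the within-episode items one rank-one update at a time: for each $k$ it uses $\det\left[\Sigma_{t+1}^{-1}\right]\geq\det\left[\Sigma_t^{-1}\right]\left(1+z_{t,k}^2/\sigma^2\right)$, and to multiply these $|A^t|$ inequalities together it must raise $\det\left[\Sigma_{t+1}^{-1}\right]$ to the power $K$, which is exactly where the paper's Remark on the extra $O(\sqrt{K})$ factor lives; the per-item slack variables then stay in $[0,\lambda^2]$ and the concavity step is applied with threshold $\lambda^2$, yielding the denominator $\log\left(1+\lambda^2/\sigma^2\right)$. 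You instead collapse the whole episode into one rank-$|A^t|$ increment and use $\det(I+B_t)\geq 1+\mathrm{trace}(B_t)$, which keeps the start-of-episode matrix $\Sigma_t$ intact inside the trace (correctly diagnosing why the textbook per-observation potential lemma does not directly apply here), avoids the power-$K$ trick entirely, and pays for the batching only through the larger range $z_t\leq K\lambda^2/\sigma^2$ in the monotonicity step. The net effect is that your intermediate bound carries $\log\left(1+K\lambda^2/\sigma^2\right)$ in the denominator, which is at least $\log\left(1+\lambda^2/\sigma^2\right)$, so your argument in fact proves a slightly tighter inequality than the lemma states; the final comparison you make to recover the paper's exact right-hand side is valid since $K\geq 1$. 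Both proofs are deterministic worst-case bounds, as required for the way the lemma is used in Theorems \ref{thm:bound_combts} and \ref{thm:bound_comblinucb}.
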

\begin{proof}
To simplify the exposition, we define
\begin{align}
z_{t,k}=\sqrt{\phi_{a^t_k}^T \Sigma_t \phi_{a^t_k}}.
\end{align}
First, notice that $\Sigma_t^{-1}$ is the Gramian matrix and satisfies
\begin{align}
\Sigma_{t+1}^{-1}=\Sigma_t^{-1}+\frac{1}{\sigma^2} \sum_{k=1}^{|A^t|}
\phi_{a^t_k}\phi_{a^t_k}^T.
\end{align}
Hence for any $t$, $k$, we have that
\begin{align}
\det \left[ \Sigma_{t+1}^{-1} \right] &\geq
\det \left[
\Sigma_t^{-1}+\frac{1}{\sigma^2} 
\phi_{a^t_k}\phi_{a^t_k}^T
\right] 
=
\det \left[ \Sigma_t^{-\frac{1}{2}}
\left(
I+\frac{1}{\sigma^2} \Sigma_t^{\frac{1}{2}}
\phi_{a^t_k}\phi_{a^t_k}^T
\Sigma_t^{\frac{1}{2}}
\right)
\Sigma_t^{-\frac{1}{2}}
\right] \nonumber \\
&=
\det \left[ \Sigma_t^{-1}\right]
\det \left[
I+\frac{1}{\sigma^2} \Sigma_t^{\frac{1}{2}}
\phi_{a^t_k}\phi_{a^t_k}^T
\Sigma_t^{\frac{1}{2}}
\right] 
=
\det \left[ \Sigma_t^{-1}\right]
\left(
1+ \frac{1}{\sigma^2} \phi_{a^t_k}^T
\Sigma_t \phi_{a^t_k}
\right) \nonumber \\
&=
\det \left[ \Sigma_t^{-1}\right]
\left(
1+ \frac{z_{t,k}^2}{\sigma^2} 
\right).
\end{align}
Hence we have that
\begin{align}
\left( \det \left[ \Sigma_{t+1}^{-1} \right] \right)^{|A^t|} \geq
\left( \det \left[ \Sigma_{t}^{-1} \right] \right)^{|A^t|} \prod_{k=1}^{|A^t|}
\left(
1+ \frac{z_{t,k}^2}{\sigma^2} 
\right). \label{eq:temporary}
\end{align}
\normalsize
\begin{remark}
\label{remark:remark1}
This is where the extra $O(\sqrt{K})$ factor arises. Notice that this extra factor is purely due to linear generalization. Specifically, if $\Phi=I$, then $\Sigma_t$'s and $\Sigma_t^{-1}$'s will be diagonal, and we have
\small
\begin{align}
 \det \left[ \Sigma_{t+1}^{-1} \right]  =
 \det \left[ \Sigma_{t}^{-1} \right]  \prod_{k=1}^{|A^t|}
\left(
1+ \frac{z_{t,k}^2}{\sigma^2} 
\right). 
\end{align}
\end{remark}

Notice that Equation \ref{eq:temporary}
further implies that
\begin{align}
\left( \det \left[ \Sigma_{t+1}^{-1} \right] \right)^{K} \geq
\left( \det \left[ \Sigma_{t}^{-1} \right] \right)^{K} \prod_{k=1}^{|A^t|}
\left(
1+ \frac{z_{t,k}^2}{\sigma^2} 
\right),
\end{align}
since $\det \left[ \Sigma_{t+1}^{-1} \right] \geq \det \left[ \Sigma_{t}^{-1} \right]$ and $|A^t|\leq K$. Recall that $\det \left[\Sigma_1^{-1} \right]=\left( \frac{1}{\lambda^2}\right)^d$, we have that
\begin{align}
\left( \det \left[ \Sigma_{n+1}^{-1} \right] \right)^{K} \geq
\left( \det \left[ \Sigma_{1}^{-1} \right] \right)^{K} \prod_{t=1}^n \prod_{k=1}^{|A^t|}
\left(
1+ \frac{z_{t,k}^2}{\sigma^2} 
\right)=
\frac{1}{\lambda^{2d K}} \prod_{t=1}^n \prod_{k=1}^{|A^t|}
\left(
1+ \frac{z_{t,k}^2}{\sigma^2} 
\right).
\end{align}

On the other hand, we have
\begin{align}
\trace \left[ \Sigma_{n+1}^{-1} \right] &=
\trace \left[ \frac{1}{\lambda^2} I + \frac{1}{\sigma^2} \sum_{t=1}^n \sum_{k=1}^{|A^t|} \phi_{a^t_k} \phi_{a^t_k}^T \right] 
= \frac{d}{\lambda^2} + \frac{1}{\sigma^2} \sum_{t=1}^n \sum_{k=1}^{|A^t|} \| \phi_{a^t_k} \|^2 
\leq \frac{d}{\lambda^2} + \frac{nK}{\sigma^2},
\end{align}
where the last inequality follows from the assumption that 
$\| \phi_e \| \leq 1$, $\forall e \in E$ and $|A^t|\leq K$. From the trace-determinant inequality, we have
\[
\frac{1}{d} \trace \left[ \Sigma_{n+1}^{-1} \right] \geq 
\left ( \det \left[ \Sigma_{n+1}^{-1} \right] \right )^{\frac{1}{d}},
\]
which implies that
\begin{align}
\left( \frac{1}{\lambda^2} + \frac{nK}{d \sigma^2} \right)^{dK} \geq
\left( \frac{1}{d} \trace \left[ \Sigma_{n+1}^{-1} \right] \right)^{dK}
\geq \left ( \det \left[ \Sigma_{n+1}^{-1} \right] \right )^{K} \geq
\frac{1}{\lambda^{2d K}} \prod_{t=1}^n \prod_{k=1}^{|A^t|} \left(
1+ \frac{z_{t,k}^2}{\sigma^2} 
\right). \nonumber
\end{align}
Taking the logarithm, we have
\begin{align}
dK \log \left( 1 + \frac{nK \lambda^2}{d \sigma^2} \right) \geq 
\sum_{t=1}^n \sum_{k=1}^{|A^t|} \log \left(
1+ \frac{z_{t,k}^2}{\sigma^2} 
\right).
\end{align}
Notice that $z_{t,k}^2 = \phi_{a^t_k}^T \Sigma_t \phi_{a^t_k}$, hence we have that
$0 \leq z_{t,k}^2 \leq \phi_{a^t_k}^T \Sigma_1 \phi_{a^t_k} \leq \lambda^2 \| \phi_{a^t_k} \|^2 \leq \lambda^2$.
We have the following technical lemma:
\begin{lemma}
For any real number $x \in [0 , \lambda^2]$, we have
$x \leq \frac{\lambda^2}{\log\left(1+ \frac{\lambda^2}{\sigma^2} \right)} \log \left(1+\frac{x}{\sigma^2} \right)$.
\end{lemma}
\proof
Define $h(x) = \frac{\lambda^2}{\log\left(1+ \frac{\lambda^2}{\sigma^2} \right)} \log \left(1+\frac{x}{\sigma^2} \right)-x$, thus we only need to prove $h(x) \geq 0$ for $x \in [0 , \lambda^2]$. Notice that $h(x)$ is a strictly concave function for $x \geq 0$, and $h(0)=0$, $h(\lambda^2)=0$. From Jensen's inequality, for any $x \in (0, \lambda^2)$, we have $h(x)>0$.
\endproof
Hence we have that
\begin{align}
\sum_{t=1}^n \sum_{k=1}^{|A^t|} z_{t,k}^2 & \leq \frac{\lambda^2}{\log\left(1+ \frac{\lambda^2}{\sigma^2} \right)} \sum_{t=1}^n \sum_{k=1}^{|A^t|} \log \left(
1+ \frac{z_{t,k}^2}{\sigma^2} 
\right) 
 \leq 
\frac{dK \lambda^2  \log \left( 1 + \frac{nK \lambda^2}{d \sigma^2} \right)}{\log\left(1+ \frac{\lambda^2}{\sigma^2} \right)}
\end{align}
Finally, we have that
\begin{align}
\sum_{t=1}^n \sum_{k=1}^{|A^t|} z_{t,k} &\leq \sqrt{nK} \sqrt{\sum_{t=1}^n \sum_{k=1}^{|A^t|} z_{t,k}^2} 
\leq K \lambda\sqrt{\frac{d n  \log \left( 1 + \frac{nK \lambda^2}{d \sigma^2} \right)}{\log\left(1+ \frac{\lambda^2}{\sigma^2} \right)}}.
\end{align}
\end{proof}

Recall that the above bound holds for any realization of random variables, thus, we have
\[
\E{\sum_{t=1}^n \left[ U_t (A^t)-g(A^t, \theta^*) \right]  }{}
= c \E{\sum_{t=1}^n \sum_{k=1}^{|A^t|} z_{t,k}}{} \leq c K \lambda\sqrt{\frac{d n  \log \left( 1 + \frac{nK \lambda^2}{d} \right)}{\log\left(1+ \frac{\lambda^2}{\sigma^2} \right)}}.
\]
With 
\begin{align}
c = \min \left \{
\sqrt{\ln \left(\frac{\lambda L n}{\sqrt{2 \pi}} \right)},
\sqrt{d \ln \left( \frac{2d Kn \lambda}{\sqrt{2 \pi}}\right)}
\right \}, 
\end{align}
and combining the results in the previous subsection, we have proved Theorem \ref{thm:bound_1_refine}.

%%%%%%%%%%%%%%%%%%%%%%%%%%%%%%%%%%%%%%%%%%%%%%%%%%%%%%%%
%%%%%%%%%%%%%%%%%%%%%%%%%%%%%%%%%%%%%%%%%%%%%%%%%%%%%%%%
%%%%%%%%%%%%%%%%%%%%%%%%%%%%%%%%%%%%%%%%%%%%%%%%%%%%%%%%

\section{Proof for Theorem \ref{thm:bound_comblinucb}}
\label{proof:comblinucb}

We start by writing an alternative formula for $\Sigma_t$ and $\bar{\theta}_t$.
Notice that based on Algorithm \ref{algorithm:kalman}, we have:
\begin{align}
\Sigma_t^{-1} =& \,  \frac{1}{\lambda^2} I + \frac{1}{\sigma^2} \sum_{\tau=1}^{t-1} \sum_{k=1}^{\left| A^{\tau} \right|} \phi_{a_k^\tau} \phi_{a_k^\tau}^T  \nonumber \\
\Sigma_t^{-1} \bar{\theta}_t = & \,
\frac{1}{\sigma^2} 
\sum_{\tau=1}^{t-1} \sum_{k=1}^{\left| A^\tau \right|} \phi_{a_k^\tau} \bw_{\tau} \left( a_k^\tau \right) \label{eqn:kalman_alternative_update}
\end{align}
Interested readers might refer to Appendix \ref{sec:kalman_alternative} for the derivation of Equation (\ref{eqn:kalman_alternative_update}).
The proof proceeds as follows. We first construct a confidence set of $\theta^\ast$ based on the ``self normalized bound" developed in \cite{abbasi-yadkori11improved}. Then we derive a regret bound based on Lemma \ref{lem:key_ineq_1} derived above.

\subsection{Confidence Set}
\label{sec:confidence_set}
Our construction of confidence set is motivated by the analysis in \citep{shipra2013}.
We start by defining some useful notation. Specifically, for any $t=1,2,\ldots, n$, any $k=1,2, \ldots, \left| A^t \right|$, we define
\[
\eta_{t,k} = \bw_t \left(a_k^t \right) - \bar{\bw} \left( a_k^t \right).
\]
One key observation is that $\eta_{t,k}$'s form a Martingale difference sequence (MDS)\footnote{Note that the notion of ``time" is indexed by a pair $(t,k)$, and follows the lexicographical order.} since $\bw(e)$'s are statistically independent under $P$. Moreover, since $\bw_t \left(a_k^t \right)$ is bounded in interval $[0,1]$,
$\eta_{t,k}$'s are sub-Gaussian with constant $R=1$. We further define
\begin{align*}
V_t = & \, \frac{\sigma^2}{\lambda^2} I + \sum_{\tau=1}^{t-1} \sum_{k=1}^{\left| A^\tau\right|} \phi_{a_k^\tau} \phi_{a_k^\tau}^T \\
\xi_t = & \, \sum_{\tau=1}^{t-1} \sum_{k=1}^{\left| A^\tau \right|} \phi_{a_k^\tau} \eta_{\tau, k}
\end{align*}
As we will see later, we define $V_t$ and $\xi_t$ to use the ``self normalized bound" developed in \cite{abbasi-yadkori11improved} (see Theorem 1 of \cite{abbasi-yadkori11improved}). Notice that based on the above definition, we have
$\Sigma_t^{-1} = \frac{1}{\sigma^2} V_t$, and
\[
\bar{\theta}_t - \theta^* = \Sigma_t \left( \frac{1}{\sigma^2} \xi_t - \frac{1}{\lambda^2} \theta^* \right).
\]
To see why the second equality holds, notice that
\begin{align*}
\Sigma_t^{-1} \bar{\theta}_t = & \, \frac{1}{\sigma^2} \sum_{\tau=1}^{t-1} \sum_{k=1}^{\left| A^\tau \right|} \phi_{a_k^\tau} \left(
\phi_{a_k^\tau}^T \theta^* + \eta_{\tau,k} \right)\\
= & \,
\left(
\Sigma_t^{-1} -\frac{1}{\lambda^2} I
\right) \theta^* 
+ \frac{1}{\sigma^2} \xi_t .
\end{align*}
Hence, for any $e \in E$, we have
\begin{align*}
\left |
\left < 
\phi_e, \bar{\theta}_t - \theta^*
\right >
\right | = & \,
\left|
\phi_e^T \Sigma_t \left(\frac{1}{\sigma^2} \xi_t - \frac{1}{\lambda^2} \theta^* \right)
\right | \\
\leq & \,
\left \| \phi_e \right \|_{\Sigma_t} \left \| \frac{1}{\sigma^2} \xi_t - \frac{1}{\lambda^2} \theta^* \right \|_{\Sigma_t} \\
\leq & \,
\left \| \phi_e \right \|_{\Sigma_t}
\left[
 \frac{1}{\sigma^2} \left \| \xi_t  \right \|_{\Sigma_t} + \frac{1}{\lambda^2} \left \| \theta^* \right \|_{\Sigma_t}
\right],
\end{align*}
where the first inequality follows from the Cauchy-Schwarz inequality, and the second inequality follows from the triangular inequality.
Notice that
\[
 \left \| \theta^* \right \|_{\Sigma_t} \leq  \left \| \theta^* \right \|_{\Sigma_1} = \lambda \left \| \theta^* \right \|_2,
\]
hence we have
\[
\left |
\left < 
\phi_e, \bar{\theta}_t - \theta^*
\right >
\right | 
\leq
\left \| \phi_e \right \|_{\Sigma_t}
\left[
 \frac{1}{\sigma^2} \left \| \xi_t  \right \|_{\Sigma_t} + \frac{1}{\lambda} \left \| \theta^* \right \|_2
\right].
\]
Moreover, we have
\begin{align*}
\frac{1}{\sigma^2} \left \|
\xi_t
\right \|_{\Sigma_t} =& \, \frac{1}{\sigma^2} \left \|
\xi_t
\right \|_{\sigma^2 V_t^{-1}} = \frac{1}{\sigma} \left \| \xi_t \right \|_{V_t^{-1}}.
\end{align*}
So we have
\begin{equation}
\left |
\left < 
\phi_e, \bar{\theta}_t - \theta^*
\right >
\right | 
\leq
\left \| \phi_e \right \|_{\Sigma_t}
\left[
\frac{1}{\sigma} \left \| \xi_t \right \|_{V_t^{-1}} + \frac{1}{\lambda} \left \| \theta^* \right \|_2
\right].
\end{equation}
The above inequality always holds. We now provide a high probability bound on $\left \| \xi_t \right \|_{V_t^{-1}}$, based on
the ``self normalized bound" proposed in \cite{abbasi-yadkori11improved}.
From Theorem 1 of \cite{abbasi-yadkori11improved}, we know for any $\delta \in (0,1)$, with probability at least $1-\delta$,
\[
\left \| \xi_t \right \|_{V_t^{-1}} \leq \sqrt{2 \log \left( \frac{\det (V_t)^{1/2} \det (V_1)^{-1/2}}{\delta}\right)} \quad \forall t=1,2,\ldots.
\]
Obviously, $\det \left( V_1 \right)=\left[ \frac{\sigma^2}{\lambda^2} \right]^d$, 
on the other hand, we have
\begin{align*}
\left[ \det (V_t) \right]^{1/d} \leq & \, \frac{\trace (V_t)}{d} = \frac{\sigma^2}{\lambda^2} + \frac{1}{d} \sum_{\tau=1}^{t-1} \sum_{k=1}^{\left| A^\tau \right|} \| \phi_{a_k^\tau} \|^2 \leq  \frac{\sigma^2}{\lambda^2} + \frac{(t-1)K}{d} ,
\end{align*}
where the last inequality follows from the assumption that $\| \phi_e \| \leq 1$. Hence, for $t \leq n$, we have
\begin{align*}
\left[ \det (V_t) \right]^{1/d} \leq &   \frac{\sigma^2}{\lambda^2} + \frac{nK}{d} .
\end{align*}
Thus, with probability at least $1-\delta$, we have
\[
\left \| \xi_t \right \|_{V_t^{-1}} \leq \sqrt{d \log \left( 1+ \frac{n K \lambda^2}{d \sigma^2}\right) + 2 \log \left( \frac{1}{\delta} \right)}\quad \forall t=1,2,\ldots, n .
\]
 
Thus, we have the following lemma:
\begin{lemma}
For any $\lambda, \sigma>0$ and any $\delta \in (0,1)$, with probability at least $1-\delta$, we have
\begin{equation}
\left |
\left < 
\phi_e, \bar{\theta}_t - \theta^*
\right >
\right | 
\leq
\left \| \phi_e \right \|_{\Sigma_t}
\left[
\frac{1}{\sigma} \sqrt{d \log \left( 1+ \frac{n K \lambda^2}{d \sigma^2}\right) + 2 \log \left( \frac{1}{\delta} \right)} + \frac{\left \| \theta^* \right \|_2 }{\lambda} 
\right],
\end{equation}
for all $t=1,2,\ldots, n$, and for all $e \in E$.
\end{lemma}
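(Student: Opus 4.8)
The plan is to obtain a deterministic closed form for the estimation error $\bar{\theta}_t - \theta^*$ and then to split the per-item error $\langle \phi_e, \bar{\theta}_t - \theta^* \rangle$ into a term driven by the prior/regularization and a term driven by the observation noise. First I would use the alternative Kalman representation in Equation (\ref{eqn:kalman_alternative_update}) to write $\Sigma_t^{-1} = \frac{1}{\sigma^2} V_t$, and then substitute $\bw_\tau(a_k^\tau) = \langle \phi_{a_k^\tau}, \theta^* \rangle + \eta_{\tau,k}$ into the update for $\Sigma_t^{-1}\bar{\theta}_t$ to derive the identity $\bar{\theta}_t - \theta^* = \Sigma_t\bigl(\frac{1}{\sigma^2}\xi_t - \frac{1}{\lambda^2}\theta^*\bigr)$. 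Applying Cauchy--Schwarz in the $\Sigma_t$-inner product followed by the triangle inequality then gives
\[
\left| \langle \phi_e, \bar{\theta}_t - \theta^* \rangle \right| \leq \|\phi_e\|_{\Sigma_t}\left[\frac{1}{\sigma^2}\|\xi_t\|_{\Sigma_t} + \frac{1}{\lambda^2}\|\theta^*\|_{\Sigma_t}\right].
\]
A useful feature of this decomposition is that all of the $e$-dependence is isolated into the prefactor $\|\phi_e\|_{\Sigma_t}$, so a single high-probability event will eventually cover all $e \in E$ simultaneously.

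The regularization term is the easy one. Since each Kalman update subtracts a positive semidefinite matrix from the covariance, $\Sigma_t \preceq \Sigma_1 = \lambda^2 I$, and hence $\|\theta^*\|_{\Sigma_t} \leq \|\theta^*\|_{\Sigma_1} = \lambda\|\theta^*\|_2$, which turns the second bracketed term into $\frac{1}{\lambda}\|\theta^*\|_2$. For the noise term I would use $\Sigma_t = \sigma^2 V_t^{-1}$ to rewrite $\frac{1}{\sigma^2}\|\xi_t\|_{\Sigma_t} = \frac{1}{\sigma}\|\xi_t\|_{V_t^{-1}}$, which recasts the remaining work as a uniform-in-$t$ high-probability bound on the self-normalized quantity $\|\xi_t\|_{V_t^{-1}}$.

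The main obstacle is controlling $\|\xi_t\|_{V_t^{-1}}$ simultaneously for all $t \leq n$, since $\xi_t$ accumulates noise from every item observed in every past episode. The key step is to recognize that the scalar noises $\eta_{\tau,k}$, ordered lexicographically in $(\tau,k)$, form a martingale difference sequence adapted to the natural filtration, and that boundedness of $\bw_\tau(\cdot)$ in $[0,1]$ makes them conditionally sub-Gaussian with constant $R = 1$. With $V_t$ serving as the regularized Gram matrix, Theorem~1 of \cite{abbasi-yadkori11improved} (the self-normalized bound) then yields, with probability at least $1-\delta$ and uniformly in $t$,
\[
\|\xi_t\|_{V_t^{-1}} \leq \sqrt{2\log\!\left(\frac{\det(V_t)^{1/2}\det(V_1)^{-1/2}}{\delta}\right)}.
\]

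Finally I would make the determinant ratio explicit. Since $\det(V_1) = (\sigma^2/\lambda^2)^d$ and the trace--determinant (AM--GM) inequality gives $\det(V_t)^{1/d} \leq \trace(V_t)/d$, the assumptions $\|\phi_e\|_2 \leq 1$ and $|A^\tau| \leq K$ yield $\trace(V_t)/d \leq \sigma^2/\lambda^2 + nK/d$ for all $t \leq n$. Substituting this into the logarithm collapses the square root to $\sqrt{d\log(1 + nK\lambda^2/(d\sigma^2)) + 2\log(1/\delta)}$, and combining this with the deterministic regularization term $\frac{1}{\lambda}\|\theta^*\|_2$ inside the bracket completes the proof of the lemma.
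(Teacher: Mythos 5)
Your proposal is correct and follows essentially the same route as the paper's own proof: the closed-form identity $\bar{\theta}_t - \theta^* = \Sigma_t\bigl(\tfrac{1}{\sigma^2}\xi_t - \tfrac{1}{\lambda^2}\theta^*\bigr)$, Cauchy--Schwarz plus the triangle inequality, the bound $\|\theta^*\|_{\Sigma_t} \leq \lambda\|\theta^*\|_2$, the self-normalized martingale bound of Abbasi-Yadkori et al.\ applied to the lexicographically ordered noise sequence, and the trace--determinant inequality to make the log-determinant explicit. No gaps.
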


Notice that $\left \| \phi_e \right \|_{\Sigma_t} = \sqrt{\phi_e^T \Sigma_t \phi_e}$, thus, the above lemma immediately implies the following lemma:
\begin{lemma}
\label{lemma:confidence}
For any $\lambda, \sigma>0$, any $\delta \in (0,1)$, and any
\[
c \geq \frac{1}{\sigma} \sqrt{d \log \left( 1+ \frac{n K \lambda^2}{d \sigma^2}\right) + 2 \log \left( \frac{1}{\delta} \right)} + \frac{\left \| \theta^* \right \|_2}{\lambda} ,
\]
with probability at least $1-\delta$, we have
\[
\left< \phi_e, \bar{\theta}_t \right> - c \sqrt{\phi_e^T \Sigma_t \phi_e} \leq 
\left< \phi_e, \theta^* \right> \leq \left< \phi_e, \bar{\theta}_t \right> + c \sqrt{\phi_e^T \Sigma_t \phi_e},
\]
for all $e \in E$ and $t=1,2,\ldots n$.
\end{lemma}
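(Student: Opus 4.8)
The plan is to derive Lemma~\ref{lemma:confidence} as an immediate consequence of the preceding lemma, which already carries all of the statistical content. That lemma guarantees that, with probability at least $1-\delta$ and simultaneously for every episode $t \le n$ and every item $e \in E$,
\[
\left| \left< \phi_e, \bar{\theta}_t - \theta^* \right> \right| \leq \left\| \phi_e \right\|_{\Sigma_t} \left[ \frac{1}{\sigma} \sqrt{d \log\left(1 + \frac{nK\lambda^2}{d\sigma^2}\right) + 2\log\left(\frac{1}{\delta}\right)} + \frac{\left\| \theta^* \right\|_2}{\lambda} \right].
\]
Everything delicate is concentrated in this estimate: it is where the self-normalized martingale bound of Abbasi-Yadkori et al.\ and the control $\left[\det(V_t)\right]^{1/d} \le \trace(V_t)/d \le \sigma^2/\lambda^2 + nK/d$ are used, and it is the unique source of the failure probability $\delta$.

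First I would record the identity $\left\| \phi_e \right\|_{\Sigma_t} = \sqrt{\phi_e^T \Sigma_t \phi_e} \ge 0$, so that the right-hand side above is a nonnegative multiple of the bracketed coefficient. By the hypothesis on $c$, that coefficient is at most $c$; hence replacing it by $c$ only enlarges the right-hand side, and on the same probability-$(1-\delta)$ event I obtain
\[
\left| \left< \phi_e, \bar{\theta}_t - \theta^* \right> \right| \leq c \sqrt{\phi_e^T \Sigma_t \phi_e}
\]
for all $t \le n$ and all $e \in E$, with no additional loss in probability.

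Next I would unfold the absolute value into the two one-sided inequalities $-c\sqrt{\phi_e^T \Sigma_t \phi_e} \le \left< \phi_e, \bar{\theta}_t - \theta^* \right> \le c\sqrt{\phi_e^T \Sigma_t \phi_e}$, expand $\left< \phi_e, \bar{\theta}_t - \theta^* \right> = \left< \phi_e, \bar{\theta}_t \right> - \left< \phi_e, \theta^* \right>$, and rearrange to place $\left< \phi_e, \theta^* \right>$ in the middle. This produces exactly the claimed sandwich, valid on the same event and uniformly over $t \le n$ and $e \in E$.

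The final step presents essentially no obstacle, being pure algebraic rearrangement. The real difficulty --- were one to prove the statement without invoking the preceding lemma --- lies entirely in that intermediate bound: writing $\bar{\theta}_t - \theta^* = \Sigma_t\left(\frac{1}{\sigma^2}\xi_t - \frac{1}{\lambda^2}\theta^*\right)$, applying Cauchy-Schwarz in the $\Sigma_t$-norm, and then bounding $\left\| \xi_t \right\|_{V_t^{-1}}$ uniformly in $t$ via the self-normalized inequality. Given that bound, the present lemma follows at once.
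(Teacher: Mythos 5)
Your proposal is correct and matches the paper's own derivation: the paper likewise obtains this lemma as an immediate consequence of the preceding bound on $\left|\left<\phi_e, \bar{\theta}_t - \theta^*\right>\right|$, using only the identity $\left\|\phi_e\right\|_{\Sigma_t} = \sqrt{\phi_e^T \Sigma_t \phi_e}$, the hypothesis that $c$ dominates the bracketed coefficient, and the unfolding of the absolute value into the two-sided inequality. You have also correctly located the statistical content in the preceding lemma (the self-normalized bound of Abbasi-Yadkori et al.\ together with the trace--determinant control of $V_t$), so nothing further is needed.
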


Notice that 
\[
\left< \phi_e, \theta^* \right> \leq \left< \phi_e, \bar{\theta}_t \right> + c \sqrt{\phi_e^T \Sigma_t \phi_e}
\]
is exactly $\bar{\bw} (e) \leq \hat{\bw}_t (e)$.

\subsection{Regret Analysis}
We define event $G$ as
\begin{align}
G= \left \{
\left< \phi_e, \bar{\theta}_t \right> - c \sqrt{\phi_e^T \Sigma_t \phi_e} \leq 
\left< \phi_e, \theta^* \right> \leq \left< \phi_e, \bar{\theta}_t \right> + c \sqrt{\phi_e^T \Sigma_t \phi_e} \;
\forall e \in E, \, \forall t=1,\ldots, n
\right \}, \label{eqn:G}
\end{align}
and use $\bar{G}$ to denote the complement of event $G$.
Recall that Lemma \ref{lemma:confidence} states that if
\begin{equation}
\label{eq:c_bound}
c \geq \frac{1}{\sigma} \sqrt{d \log \left( 1+ \frac{n K \lambda^2}{d \sigma^2}\right) + 2 \log \left( \frac{1}{\delta} \right)} + \frac{1}{\lambda} \left \| \theta^* \right \|_2,
\end{equation}
then $\mathbb{P} (G) \geq 1-\delta$. Moreover, by definition, under event $G$, we have $\bar{\bw} (e) \leq \hat{\bw}_t (e)$,
for all $t=1,\ldots, n$ and any $e \in E$.

Notice that
\begin{align*}
R(n) = & \, \sum_{t=1}^n \mathbb{E} \left[ \sum_{e \in A^*} \bw_t (e) -  \sum_{e \in A^t} \bw_t (e) \right] \\
= & \, \sum_{t=1}^n \mathbb{E} \left[ \sum_{e \in A^*} \bar{\bw} (e) -  \sum_{e \in A^t} \bar{\bw} (e) \right] \\
= & \, \mathbb{P} \left(G \right) \sum_{t=1}^n \mathbb{E} \left[ \sum_{e \in A^*} \bar{\bw} (e) -  \sum_{e \in A^t} \bar{\bw} (e) \middle | G \right]
+ \mathbb{P} \left ( \bar{G} \right)
\sum_{t=1}^n \mathbb{E} \left[ \sum_{e \in A^*} \bar{\bw} (e) -  \sum_{e \in A^t} \bar{\bw} (e) \middle | \bar{G} \right] \\
\leq & \,
\sum_{t=1}^n \mathbb{E} \left[ \sum_{e \in A^*} \bar{\bw} (e) -  \sum_{e \in A^t} \bar{\bw} (e) \middle | G \right] + 
\mathbb{P} \left ( \bar{G} \right) nK,
\end{align*}
where the last inequality follows from the naive bound on the realized regret. If $c$ satisfies inequality (\ref{eq:c_bound}), we have
$\mathbb{P} \left( \bar{G} \right) \leq \delta$, hence we have
\[
R(n) \leq \sum_{t=1}^n \mathbb{E} \left[ \sum_{e \in A^*} \bar{\bw} (e) -  \sum_{e \in A^t} \bar{\bw} (e) \middle | G \right] + 
 nK \delta.
\]
Finally, we bound $\sum_{t=1}^n \mathbb{E} \left[ \sum_{e \in A^*} \bar{\bw} (e) -  \sum_{e \in A^t} \bar{\bw} (e) \middle | G \right] $
using a worst-case bound conditioning on $G$ (worst-case over all the possible random realizations), notice that conditioning on
$G$, we have
\begin{align*}
\sum_{e \in A^*} \bar{\bw} (e) \leq \sum_{e \in A^*} \hat{\bw}_t (e) \leq \sum_{e \in A^t} \hat{\bw}_t (e),
\end{align*}
where the first inequality follows from the definition of event $G$, and the second inequality follows from that 
$A^t$ is the \emph{exact} solution of the combinatorial optimization problem $(E, \cA, \hat{\bw}_t)$.
Thus we have
\begin{align*}
\sum_{e \in A^*} \bar{\bw} (e) - \sum_{e \in A^t} \bar{\bw} (e) \leq & \,
\sum_{e \in A^t} \hat{\bw}_t (e) - \sum_{e \in A^t} \bar{\bw} (e) \\
=& \,
\sum_{e \in A^t} \left[
\left < 
\phi_e, \bar{\theta}_t - \theta^*
\right > + c\sqrt{\phi_e^T \Sigma_t \phi_e}
\right] \\
\leq & \,
2c \sum_{e \in A^t} \sqrt{\phi_e^T \Sigma_t \phi_e},
\end{align*}
where the last inequality follows from the definition of $G$.
Recall that from Lemma \ref{lem:key_ineq_1}, we have
\[
\sum_{t=1}^n \sum_{e \in A^t} \sqrt{\phi_e^T \Sigma_t \phi_e} \leq K \lambda\sqrt{\frac{d n  \log \left( 1 + \frac{nK \lambda^2}{d \sigma^2} \right)}{\log\left(1+ \frac{\lambda^2}{\sigma^2} \right)}}.\]
Thus we have
\[
\sum_{t=1}^n \mathbb{E} \left[ \sum_{e \in A^*} \bar{\bw} (e) -  \sum_{e \in A^t} \bar{\bw} (e) \middle | G \right] 
\leq 2c \mathbb{E} \left[ \sum_{t=1}^n \sum_{e \in A^t} \sqrt{\phi_e^T \Sigma_t \phi_e}\right]
\leq 2c  K \lambda\sqrt{\frac{d n  \log \left( 1 + \frac{nK \lambda^2}{d \sigma^2} \right)}{\log\left(1+ \frac{\lambda^2}{\sigma^2} \right)}},
\]
which implies 
\[
R(n) \leq 2c  K \lambda\sqrt{\frac{d n  \log \left( 1 + \frac{nK \lambda^2}{d \sigma^2} \right)}{\log\left(1+ \frac{\lambda^2}{\sigma^2} \right)}} + 
 nK \delta.
\]

%%%%%%%%%%%%%%%%%%%%%%%%%%%%%%%%%%%%%%%%%%%%%%%%%%%%%%%%%%%%%%%%%%%%%
%%%%%%%%%%%%%%%%%%%%%%%%%%%%%%%%%%%%%%%%%%%%%%%%%%%%%%%%%%%%%%%%%%%%%
%%%%%%%%%%%%%%%%%%%%%%%%%%%%%%%%%%%%%%%%%%%%%%%%%%%%%%%%%%%%%%%%%%%%%

\section{Technical Lemma}
\label{sec:kalman_alternative}
In this section, we derive Equation (\ref{eqn:kalman_alternative_update}). We first prove the following technical lemma:
\begin{lemma}
\label{lem:tech_kalman}
For any $\phi , \bar{\theta} \in \realset^d$, any positive definite $\Sigma \in \realset^{d \times d}$, any $\sigma>0$, and 
any $w \in \realset$, if we define
\begin{align}
\Sigma_{\mathrm{new}} = & \,  \Sigma - \frac{\Sigma \phi \phi^T \Sigma}{\phi^T \Sigma \phi +\sigma^2} 
\nonumber \\
\bar{\theta}_{\mathrm{new}} = & \, \left[ I -\frac{\Sigma \phi \phi^T}{\phi^T \Sigma \phi +\sigma^2} \right] \bar{\theta}+
\left[
\frac{\Sigma \phi }{\phi^T \Sigma \phi +\sigma^2} 
\right] w, \nonumber
\end{align}
then we have
\begin{align}
\Sigma_{\mathrm{new}}^{-1} =& \, \Sigma^{-1} + \frac{1}{\sigma^2} \phi \phi^T  \label{eqn:tech:cov} \\
\Sigma_{\mathrm{new}}^{-1} \bar{\theta}_{\mathrm{new}} =& \, \Sigma^{-1}\bar{\theta} + \frac{1}{\sigma^2} \phi w \label{eqn:tech:mean}.
\end{align}
\end{lemma}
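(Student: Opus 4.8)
The plan is to recognize both identities as the standard conversion between the covariance (Kalman) form and the information form of a rank-one Bayesian update, so that everything reduces to the Sherman--Morrison formula together with routine bookkeeping of scalar denominators. Throughout, the key structural fact is that $\phi^T\Sigma\phi$ is a scalar and every correction term is a multiple of the rank-one matrix $\Sigma\phi\phi^T$.

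First I would establish the covariance identity (\ref{eqn:tech:cov}). Rather than inverting $\Sigma^{-1}+\frac{1}{\sigma^2}\phi\phi^T$ from scratch, I would verify directly that the proposed $\Sigma_{\mathrm{new}}$ is its inverse by checking $\Sigma_{\mathrm{new}}\left(\Sigma^{-1}+\frac{1}{\sigma^2}\phi\phi^T\right)=I$. Expanding the product, every term other than $I$ is a multiple of $\Sigma\phi\phi^T$, and collecting coefficients leaves $I + \Sigma\phi\phi^T\left[\frac{1}{\sigma^2} - \frac{1}{\phi^T\Sigma\phi+\sigma^2} - \frac{1}{\sigma^2}\frac{\phi^T\Sigma\phi}{\phi^T\Sigma\phi+\sigma^2}\right]$; the bracketed scalar vanishes because over the common denominator $\sigma^2(\phi^T\Sigma\phi+\sigma^2)$ its numerator is $(\phi^T\Sigma\phi+\sigma^2)-\sigma^2-\phi^T\Sigma\phi=0$. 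Equivalently, one may simply quote Sherman--Morrison with $A=\Sigma^{-1}$, $u=\frac{1}{\sigma^2}\phi$, and $v=\phi$. Since $\Sigma$ is positive definite and $\sigma>0$, the denominator $\phi^T\Sigma\phi+\sigma^2$ is strictly positive, so all expressions are well defined and $\Sigma_{\mathrm{new}}$ is invertible with the manifestly positive-definite inverse $\Sigma^{-1}+\frac{1}{\sigma^2}\phi\phi^T$.

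Next I would deduce the mean identity (\ref{eqn:tech:mean}) from (\ref{eqn:tech:cov}). The cleanest route is to show $\bar{\theta}_{\mathrm{new}}=\Sigma_{\mathrm{new}}\left(\Sigma^{-1}\bar{\theta}+\frac{1}{\sigma^2}\phi w\right)$, after which left-multiplying by $\Sigma_{\mathrm{new}}^{-1}$ yields the claim at once. Substituting $\Sigma_{\mathrm{new}}=\Sigma-\frac{\Sigma\phi\phi^T\Sigma}{\phi^T\Sigma\phi+\sigma^2}$ and distributing, the $\bar{\theta}$ contribution gives $\Sigma_{\mathrm{new}}\Sigma^{-1}\bar{\theta}=\left[I-\frac{\Sigma\phi\phi^T}{\phi^T\Sigma\phi+\sigma^2}\right]\bar{\theta}$, which is exactly the first bracket in the definition of $\bar{\theta}_{\mathrm{new}}$. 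For the $w$ contribution, $\Sigma_{\mathrm{new}}\frac{1}{\sigma^2}\phi w = \frac{w}{\sigma^2}\Sigma\phi\left[1-\frac{\phi^T\Sigma\phi}{\phi^T\Sigma\phi+\sigma^2}\right] = \frac{w}{\sigma^2}\Sigma\phi\cdot\frac{\sigma^2}{\phi^T\Sigma\phi+\sigma^2} = \frac{\Sigma\phi}{\phi^T\Sigma\phi+\sigma^2}w$, which is the second term in the definition. Summing the two pieces reproduces $\bar{\theta}_{\mathrm{new}}$ exactly.

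There is no genuine obstacle here beyond careful tracking of the scalar denominators; the entire argument is mechanical once one treats $\phi^T\Sigma\phi$ as a scalar and factors out the common rank-one matrix $\Sigma\phi\phi^T$. The only point worth stating explicitly is positive-definiteness of $\Sigma$ and $\sigma>0$, which guarantees $\phi^T\Sigma\phi+\sigma^2>0$ so that every division is legitimate and $\Sigma_{\mathrm{new}}$ is truly invertible. Once (\ref{eqn:tech:cov}) is in hand, (\ref{eqn:tech:mean}) follows by the single substitution described above.
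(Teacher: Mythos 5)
Your proposal is correct and follows essentially the same route as the paper: the covariance identity is the Sherman--Morrison/Woodbury formula, and the mean identity is obtained by recognizing $I-\frac{\Sigma\phi\phi^T}{\phi^T\Sigma\phi+\sigma^2}=\Sigma_{\mathrm{new}}\Sigma^{-1}$ and then passing between $\bar{\theta}_{\mathrm{new}}$ and $\Sigma^{-1}\bar{\theta}+\frac{1}{\sigma^2}\phi w$ via $\Sigma_{\mathrm{new}}$. The only cosmetic difference is that you simplify $\Sigma_{\mathrm{new}}\phi$ directly where the paper instead simplifies $\Sigma_{\mathrm{new}}^{-1}\Sigma\phi$ after multiplying through by the inverse; the two computations are mirror images and both close the argument.
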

\proof
Notice that Equation (\ref{eqn:tech:cov}) follows directly from the Woodbury matrix identity (matrix inversion lemma). 
We now prove Equation (\ref{eqn:tech:mean}). Notice that we have
\begin{align}
\bar{\theta}_{\mathrm{new}} = & \, \left[ I -\frac{\Sigma \phi \phi^T}{\phi^T \Sigma \phi +\sigma^2} \right] \bar{\theta}+
\left[
\frac{\Sigma \phi }{\phi^T \Sigma \phi +\sigma^2} 
\right] w \nonumber \\
=& \, \left[ \Sigma -\frac{\Sigma \phi \phi^T \Sigma}{\phi^T \Sigma \phi +\sigma^2} \right] \Sigma^{-1} \bar{\theta}+
\left[
\frac{\Sigma \phi }{\phi^T \Sigma \phi +\sigma^2} 
\right] w \nonumber \\
=& \,
\Sigma_{\mathrm{new}} \Sigma^{-1} \bar{\theta}+
\left[
\frac{\Sigma \phi }{\phi^T \Sigma \phi +\sigma^2} 
\right] w, \nonumber
\end{align}
that is,
\begin{align}
\Sigma_{\mathrm{new}}^{-1} \bar{\theta}_{\mathrm{new}} =
\Sigma^{-1} \bar{\theta}+
\left[
\frac{\Sigma_{\mathrm{new}}^{-1} \Sigma \phi }{\phi^T \Sigma \phi +\sigma^2} 
\right] w. \label{eqn:tech:p1}
\end{align}
Notice that
\begin{align}
\Sigma_{\mathrm{new}}^{-1} \Sigma \phi = & \, \left[ \Sigma^{-1} + \frac{1}{\sigma^2} \phi \phi^T \right] \Sigma \phi
=\phi +\frac{\phi^T \Sigma \phi}{\sigma^2} \phi=
\frac{\sigma^2+ \phi^T \Sigma \phi}{\sigma^2} \phi.
\label{eqn:tech:p2}
\end{align}
Plug Equation (\ref{eqn:tech:p2}) into Equation (\ref{eqn:tech:p1}), we have 
Equation (\ref{eqn:tech:mean}).
\endproof

Based on Lemma \ref{lem:tech_kalman}, by mathematical induction, we have
\begin{align}
\Sigma_t^{-1} =& \,  \Sigma_1^{-1} + \frac{1}{\sigma^2} \sum_{\tau=1}^{t-1} \sum_{k=1}^{\left| A^{\tau} \right|} \phi_{a_k^\tau} \phi_{a_k^\tau}^T  \nonumber \\
\Sigma_t^{-1} \bar{\theta}_t = & \, \Sigma_1^{-1} \bar{\theta}_1
\frac{1}{\sigma^2} 
\sum_{\tau=1}^{t-1} \sum_{k=1}^{\left| A^\tau \right|} \phi_{a_k^\tau} \bw_{\tau} \left( a_k^\tau \right), \nonumber
\end{align}
further noting that $\Sigma_1 = \lambda^2 I$ and $\bar{\theta}_1=0$, we can derive Equation (\ref{eqn:kalman_alternative_update}).

\section{A Variant of Theorem \ref{thm:bound_comblinucb} for Approximation Algorithms}
\label{appendix:variant}
By suitably redefining the realized regret, we can prove a variant of Theorem \ref{thm:bound_comblinucb}
in which $\oracle$ can be an approximation algorithm. Specifically, for a (possibly approximation) algorithm $\oracle$, let
$A^*(\bw)$ be the solution of $\oracle$ to the optimization problem $(E, \cA , \bw)$, we say $\gamma \in [0,1)$ is a \emph{sub-optimality gap} of
$\oracle$ if
\begin{align}
f(A^*(\bw), \bw) \geq (1- \gamma) \max_{A \in \cA} f(A, \bw), \quad \forall \bw.   \label{eqn:sub_optimality_gap}
\end{align}
Then we define the (scaled) realized regret $R^{\gamma}_t$ as
\begin{align}
R^{\gamma}_t = f \left( A^{\mathrm{opt}}, \bw_t \right)- \frac{ f \left( A^{t}, \bw_t \right)}{1-\gamma},
\end{align}
where $A^{\mathrm{opt}}$ is the exact solution to the optimization problem $(E, \cA , \bar{\bw})$.
The (scaled) cumulative regret $R^\gamma(n)$ is defined as 
\[
R^\gamma(n)=\sum_{t=1}^n \mathbb{E} \left[ R_t^{\gamma} \middle | \bar{\bw} \right].
\]
Under the assumptions that 
(1) the support of $P$ is a subset of 
$[0,1]^L$ (i.e. $\bw_t (e) \in [0,1]$ $\forall t$ and $\forall e \in E$), 
(2) the item weight $\bw(e)$'s are statistically independent under $P$,
and
(3) the oracle $\oracle$ has sub-optimality gap $\gamma \in [0,1)$,
we have the following variant of Theorem \ref{thm:bound_comblinucb}
when $\comblinucb$ is applied to coherent learning cases:

\begin{theorem}
\label{thm:bound_comblinucb_variant}
For any $\lambda, \sigma>0$, any $\delta \in (0,1)$, and any $c$ satisfying
%\small
\begin{align}
c \geq \frac{1}{\sigma} \sqrt{d \ln \left( 1+ \frac{n K \lambda^2}{d \sigma^2}\right) + 2 \ln \left( \frac{1}{\delta} \right)} + \frac{\left \| \theta^* \right \|_2}{\lambda} ,
\label{eq:c_lb_variant}
\end{align}
%\normalsize
if $\bar{\bw} = \Phi \theta^\ast$ and the above two assumptions hold, then 
under $\comblinucb$ algorithm with parameter $(\Phi, \lambda, \sigma, c)$, we have
%\small
\[
R^{\gamma}(n) \leq \frac{2c  K \lambda}{1-\gamma} \sqrt{\frac{d n  \ln \left( 1 + \frac{nK \lambda^2}{d \sigma^2} \right)}{\ln \left(1+ \frac{\lambda^2}{\sigma^2} \right)}} + 
 nK \delta.
\]
%\normalsize
\end{theorem}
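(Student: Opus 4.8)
The plan is to mirror the proof of Theorem~\ref{thm:bound_comblinucb} essentially verbatim, changing only the single step in which the exactness of $\oracle$ was invoked. The confidence-set construction via the self-normalized bound of \cite{abbasi-yadkori11improved} never refers to the oracle, so Lemma~\ref{lemma:confidence} transfers unchanged: for any $c$ satisfying Inequality~(\ref{eq:c_lb_variant}), the good event $G$ of~(\ref{eqn:G}) has $\mathbb{P}(G) \geq 1-\delta$, and on $G$ we have the two-sided bound $\langle \phi_e, \bar{\theta}_t - \theta^* \rangle \leq c\sqrt{\phi_e^T \Sigma_t \phi_e}$ together with $\bar{\bw}(e) \leq \hat{\bw}_t(e)$ for every $e \in E$ and every $t$.

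First I would rewrite the scaled regret in terms of $\bar{\bw}$. Since $\bw_t$ is independent of $A^t$ (which is measurable with respect to the history and the internal randomization of $\oracle$), conditioning gives $\mathbb{E}[R^\gamma_t \mid \bar{\bw}] = \mathbb{E}[f(A^{\mathrm{opt}}, \bar{\bw}) - f(A^t, \bar{\bw})/(1-\gamma) \mid \bar{\bw}]$. I would then split this expectation over $G$ and $\bar{G}$. On $\bar{G}$, the bounded-support assumption gives $\bar{\bw}(e) \in [0,1]$, so $f(A^t, \bar{\bw}) \geq 0$ and $f(A^{\mathrm{opt}}, \bar{\bw}) \leq |A^{\mathrm{opt}}| \leq K$; hence the per-step summand is at most $K$, which together with $\mathbb{P}(\bar{G}) \leq \delta$ yields the additive $nK\delta$ term exactly as before.

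The one genuinely new step is the bound on $G$, where I would chain three inequalities. The confidence bound $\bar{\bw} \leq \hat{\bw}_t$ gives $f(A^{\mathrm{opt}}, \bar{\bw}) \leq f(A^{\mathrm{opt}}, \hat{\bw}_t)$; then, because $A^{\mathrm{opt}} \in \cA$, applying the sub-optimality gap~(\ref{eqn:sub_optimality_gap}) to the weight vector $\hat{\bw}_t$ gives $f(A^t, \hat{\bw}_t) \geq (1-\gamma)\max_{A \in \cA} f(A, \hat{\bw}_t) \geq (1-\gamma) f(A^{\mathrm{opt}}, \hat{\bw}_t)$, i.e.\ $f(A^{\mathrm{opt}}, \hat{\bw}_t) \leq f(A^t, \hat{\bw}_t)/(1-\gamma)$. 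Combining these,
\begin{align*}
f(A^{\mathrm{opt}}, \bar{\bw}) - \frac{f(A^t, \bar{\bw})}{1-\gamma}
&\leq \frac{f(A^t, \hat{\bw}_t) - f(A^t, \bar{\bw})}{1-\gamma} \\
&= \frac{1}{1-\gamma} \sum_{e \in A^t} \left[ \langle \phi_e, \bar{\theta}_t - \theta^* \rangle + c\sqrt{\phi_e^T \Sigma_t \phi_e} \right] \\
&\leq \frac{2c}{1-\gamma} \sum_{e \in A^t} \sqrt{\phi_e^T \Sigma_t \phi_e},
\end{align*}
where the final step again uses event $G$. This is precisely the per-step bound of Theorem~\ref{thm:bound_comblinucb} inflated by the factor $1/(1-\gamma)$.

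Finally I would sum over $t$ and apply the worst-case bound of Lemma~\ref{lem:key_ineq_1}, which is purely algebraic and oracle-independent, to get $\sum_t \mathbb{E}[\,\cdot \mid G] \leq \frac{2cK\lambda}{1-\gamma}\sqrt{dn\ln(1+nK\lambda^2/(d\sigma^2))/\ln(1+\lambda^2/\sigma^2)}$; adding the $nK\delta$ contribution from $\bar{G}$ gives the stated bound. There is no real analytical obstacle here: the only new ingredient is the sub-optimality-gap chaining on $G$, and the main thing to be careful about is that the approximation guarantee is invoked at the \emph{empirical} UCB vector $\hat{\bw}_t$ rather than at $\bar{\bw}$, and that the scaled regret $R^\gamma_t$ is defined precisely so the factor $1/(1-\gamma)$ lands on $f(A^t,\cdot)$ and propagates cleanly through the decomposition.
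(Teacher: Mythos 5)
Your proposal is correct and follows essentially the same route as the paper's own proof: the same confidence set and event decomposition giving the $nK\delta$ term, the same three-inequality chain on $G$ (monotonicity from $\bar{\bw} \leq \hat{\bw}_t$, feasibility of $A^{\mathrm{opt}}$, then the sub-optimality gap applied at $\hat{\bw}_t$), and the same final application of Lemma~\ref{lem:key_ineq_1}. No gaps.
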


\proof
Notice that Lemma~\ref{lemma:confidence} in Section~\ref{sec:confidence_set} still holds. With
$G$ defined in Equation~(\ref{eqn:G}), we have
\begin{align*}
R^{\gamma}(n) = & \, \sum_{t=1}^n \mathbb{E} \left[ \sum_{e \in A^{\mathrm{opt}}} \bw_t (e) -  \frac{1}{1-\gamma}\sum_{e \in A^t} \bw_t (e) \right] \\
= & \, \sum_{t=1}^n \mathbb{E} \left[ \sum_{e \in A^{\mathrm{opt}}} \bar{\bw} (e) -  \frac{1}{1-\gamma} \sum_{e \in A^t} \bar{\bw} (e) \right] \\
= & \, \mathbb{P} \left(G \right) \sum_{t=1}^n \mathbb{E} \left[ \sum_{e \in A^{\mathrm{opt}}} \bar{\bw} (e) -   \frac{1}{1-\gamma} \sum_{e \in A^t} \bar{\bw} (e) \middle | G \right]
+ \mathbb{P} \left ( \bar{G} \right)
\sum_{t=1}^n \mathbb{E} \left[ \sum_{e \in A^{\mathrm{opt}}} \bar{\bw} (e) -  \frac{1}{1-\gamma}\sum_{e \in A^t} \bar{\bw} (e) \middle | \bar{G} \right] \\
\leq & \,
\sum_{t=1}^n \mathbb{E} \left[ \sum_{e \in A^{\mathrm{opt}}} \bar{\bw} (e) -   \frac{1}{1-\gamma} \sum_{e \in A^t} \bar{\bw} (e) \middle | G \right] + 
\mathbb{P} \left ( \bar{G} \right) nK,
\end{align*}
where the last inequality follows from the naive bound on $R_t^{\gamma}$. If $c$ satisfies inequality (\ref{eq:c_bound}), we have
$\mathbb{P} \left( \bar{G} \right) \leq \delta$, hence we have
\[
R^{\gamma}(n) \leq \sum_{t=1}^n \mathbb{E} \left[ \sum_{e \in A^{\mathrm{opt}}} \bar{\bw} (e) -  \frac{1}{1-\gamma}\sum_{e \in A^t} \bar{\bw} (e) \middle | G \right] + 
 nK \delta.
\]
Finally, we bound $\sum_{t=1}^n \mathbb{E} \left[ \sum_{e \in A^{\mathrm{opt}}} \bar{\bw} (e) -  \frac{1}{1-\gamma}\sum_{e \in A^t} \bar{\bw} (e) \middle | G \right] $
using a worst-case bound conditioning on $G$ (worst-case over all the possible random realizations), notice that conditioning on
$G$, we have
\begin{align*}
\sum_{e \in A^{\mathrm{opt}}} \bar{\bw} (e) \leq \sum_{e \in A^{\mathrm{opt}}} \hat{\bw}_t (e) \leq 
\max_{A \in \cA} \sum_{e \in A} \hat{\bw}_t (e) \leq
\frac{1}{1-\gamma}\sum_{e \in A^t} \hat{\bw}_t (e),
\end{align*}
where 
\begin{itemize}
\item The first inequality follows from the definition of event $G$. Specifically, under event $G$, $\bar{\bw} (e) \leq \hat{\bw}_t (e)$ for all $t=1,\ldots, n$ and
all $e \in E$.
\item The second inequality follows from $A^{\mathrm{opt}} \in \cA$.
\item The last inequality follows from $A^t \leftarrow \oracle (E, \cA, \hat{\bw}_t)$ and $\oracle$ has sub-optimality gap $\gamma$ (see Equation (\ref{eqn:sub_optimality_gap})). 
\end{itemize}
Thus we have
\begin{align*}
\sum_{e \in A^{\mathrm{opt}}} \bar{\bw} (e) - \frac{1}{1-\gamma}\sum_{e \in A^t} \bar{\bw} (e) \leq & \,
\frac{1}{1-\gamma} \left[ \sum_{e \in A^t} \hat{\bw}_t (e) - \sum_{e \in A^t} \bar{\bw} (e) \right]\\
=& \,
\frac{1}{1-\gamma} \sum_{e \in A^t} \left[
\left < 
\phi_e, \bar{\theta}_t - \theta^*
\right > + c\sqrt{\phi_e^T \Sigma_t \phi_e}
\right] \\
\leq & \,
\frac{2c}{1-\gamma} \sum_{e \in A^t} \sqrt{\phi_e^T \Sigma_t \phi_e},
\end{align*}
where the last inequality follows from the definition of $G$.
Recall that from Lemma \ref{lem:key_ineq_1}, we also have
\[
\sum_{t=1}^n \sum_{e \in A^t} \sqrt{\phi_e^T \Sigma_t \phi_e} \leq K \lambda\sqrt{\frac{d n  \log \left( 1 + \frac{nK \lambda^2}{d \sigma^2} \right)}{\log\left(1+ \frac{\lambda^2}{\sigma^2} \right)}}.\]
Putting the above inequalities together, we have proved the theorem.
\endproof

\end{document}